\def\eqref#1{equation~\ref{#1}}
\def\1{\bm{1}}
\DeclareMathAlphabet{\mathsfit}{\encodingdefault}{\sfdefault}{m}{sl}
\SetMathAlphabet{\mathsfit}{bold}{\encodingdefault}{\sfdefault}{bx}{n}
\newcommand{\nb}[3]{\ifthenelse{\boolean{include-notes}}{{\colorbox{#2}{\bfseries\sffamily\scriptsize\textcolor{white}{#1}}}{\ \textcolor{#2}{\sf\small\textit{#3}}}}{}}
\newcommand{\colorred}{\color{black}}
\theoremstyle{plain}
\newtheorem{theorem}{Theorem}
\theoremstyle{definition}
\theoremstyle{remark}
\title{CASA: Bridging the Gap between Policy Improvement and Policy Evaluation with Conflict Averse Policy Iteration}
\author{Changnan Xiao \\
\tiny xiaochangnan@bytedance.com \\
\And
Haosen Shi \\
\tiny shihaosen98@gmail.com \\
\And
Jiajun Fan \\
\tiny fanjj21@mails.tsinghua.edu.cn \\
\And
Shihong Deng \\
\tiny dengshihong@bytedance.com \\
\And
Haiyan Yin \\
\tiny yinhaiyan@outlook.com \\
}
\begin{document}

\maketitle

\begin{abstract}

% \changnan{check now, have to submit a version today}
% We study the problem of model-free reinforcement learning, which is often solved following the principle of Generalized Policy Iteration (GPI). While GPI is typically an interplay between policy evaluation and policy improvement, most conventional model-free methods assume the independence of the granularity and other details of the GPI steps, \haiyan{e.g., in actor-critic algorithm,  policy and value 
% % are predicted by independent output heads while they 
% get updated by different losses where the only shared term advantage is non-differentiable},
% \haosen{e.g. jointly optimize these two objectives with a shared network and a relative weight to each.} 
We study the problem of model-free reinforcement learning, which is often solved following the principle of Generalized Policy Iteration (GPI). 
While GPI is typically an interplay between policy evaluation and policy improvement, most conventional model-free methods with function approximation assume the independence of 
% the granularity and other details of the 
GPI steps, despite of the inherent connections between them. 
% \haosen{fact that these two optimization processes have no guarantee on no conflict between them due to their essentially different optimizing speeds and objectives, which will be reflected as a strong conflict of single-step gradients for gradient-based methods.} 
In this paper, we present a method that attempts to {eliminate} the inconsistency between policy evaluation step and policy improvement step, leading to a conflict averse GPI solution with gradient-based functional approximation.  
% which aims to alleviate the gradient conflict between the two GPI steps. 
% \changnan{how to measure this lower FA error?}
% To this end, we formulate a novel learning paradigm where taking the policy evaluation step is equivalent to some compensation of performing policy improvement, and thus effectively alleviates the gradient conflict between the two GPI steps. 
% \changnan{BAD description.}
% To this end, we formulate a novel learning paradigm where taking the policy improvement step is equivalent to a policy evaluation step and a self-bootstrapped policy improvement step.\changnan{not good}
% which incentivates more exploitation.}
% We also show that the form of our proposed solution is equivalent to performing entropy-regularized policy improvement and therefore prevents the policy from being trapped into suboptimal solutions. 
% \changnan{
% Furthermore, we  demonstrate that the policy evaluation step in turn is equivalent to performing entropy-regularized policy improvement, thus preventing the policy from being trapped into suboptimal solutions through incentivizing more exploration. \changnan{not good}
%}
Our method is capital to balancing exploitation and exploration between policy-based and value-based methods and is applicable to existing policy-based and value-based methods. 
%}
We conduct extensive experiments to study theoretical properties of our method and demonstrate the effectiveness of our method on Atari 200M benchmark.
% wherein our method could outperform the several state-of-the-art (SOTA) model-free RL baselines.
% For empirical evaluation, we conduct extensive experiments to evaluate our method on the large-scale benchmark Arcade Learning Environment (ALE). Empirical results show that our method could outperform several strong baselines in both human normalized score as well as saber score .
\end{abstract}

\section{INTRODUCTION}
\label{sec:intro}

Model-free reinforcement learning has made many impressive breakthroughs in a wide range of Markov Decision Processes (MDP) ~\citep{alpha_star,ftw,agent57}.
Overall, the methods could be cast into two categories, value-based methods such as DQN~\citep{dqn} and Rainbow~\citep{rainbow}, and  policy-based methods such as TRPO~\citep{trpo}, PPO~\citep{ppo} and IMPALA~\citep{impala}.  
% \haosen{we may need a more detailed description here.}

% \jjf{Model-free reinforcement learning methods have recently mastered
% a wide variety of domains through trial and error
% learning}

Value-based methods learn state-action values and select the action according to their values. 
The main target of value-based methods is to approximate the fixed point of the Bellman equation through the generalized policy iteration (GPI) ~\citep{sutton}, which generally consists of policy evaluation and policy improvement. 
One characteristic of the value-based methods is that unless a more accurate state-action value is estimated by iterations of the policy evaluation, the policy will not be improved. 
% So the policy improvement of each policy iteration is limited.
Previous works equip value-based methods with many carefully designed structures to achieve more promising reward learning and sample efficiency ~\citep{dueling_q,priority_q,r2d2}.

Policy-based methods learn a parameterized policy directly without consulting state-action values.
One characteristic of policy-based methods is that they incorporate a policy improvement phase in every training step, while  
in contrast,  the value-based methods only change the policy after the action corresponding to the highest state-action values is changed. 
In principle, policy-based methods perform policy improvement more frequently than value-based methods.
% \haosen{a past reviewer misunderstood this claim that he think value-based method use greedy method to select action, which is more greedy than policy based method, but here we mean that value-based method changed their policy only when one action's estimation value is higher than another, but policy based method changed their probability every updates. Maybe we need a clearer way to say this.}

% Nevertheless, policy-based methods easily fall into a sub-optimal solution, where the entropy of current policy drops to zero ~\citep{sac}, and suffer from high variance caused by environment and stochastic policy.
% The actor-critic methods introduce a value function as the baseline to reduce the variance of the policy gradient ~\citep{a3c}, but maintain the other characteristics unchanged.
% \haosen{delete description about AC }

% To mitigate such problem, ~\citep{trpo,ppo} propose the trust region scheme to optimize a certain surrogate objective function.
% ~\citep{sql,sac} involve a maximum entropy framework to maximize state values as well as maximize the entropy of the policy.

We notice that value-based and policy-based methods locate at the two extremes of GPI, where value-based methods won't improve the policy until a more accurate policy evaluation is achieved, while policy-based methods improve the policy for every training step even when the policy evaluation hasn't converged. To mitigate the defect of each, we pursuit a technique that is capable of balancing  between the two extremes flexibly. 
We first study the gradients between policy improvement and policy evaluation and notice that they show a positive correlation statistically during the entire training process. To find out if there exists a way that the gradients of the policy improvement and the policy evaluation are parallel, we propose CASA, \textbf{C}ritic \textbf{AS} an \textbf{A}ctor, {\colorred which satisfies a weaker compatible condition \citep{sutton1999policy} and enhances gradient consistency between policy improvement and policy evaluation.}

With further delving into the properties of CASA, we find CASA is an innovative combination of value-based and policy-based methods. 
When the policy-based methods are equipped with CASA, the collapse to the sub-optimal solution as the entropy goes to zero is prevented by the evaluation of the state-action values, which encourages exploration.  
When the value-based methods are equipped with CASA, the policy improvement via policy gradient is equivalent to the evaluation of the state-action values and a self-bootstrapped policy improvement, which enhances exploitation.

% In this paper, we propose CASA, \textbf{C}ritic \textbf{AS} an \textbf{A}ctor, an innovative combination of the value-based and the policy-based methods.
% In general, CASA builds on the actor-critic design that estimates state values $V$, state-action values $Q$ and policy $\pi$ simultaneously.
% It integrates a consistent path between the policy evaluation and the policy improvement. It guarantees that 
% \textbf{i)} the policy evaluation is equivalent to a compensational policy improvement for the function approximation error;
% \changnan{\textbf{i)} the policy improvement is equivalent to a self-boosted policy evaluation, which enhances exploitation;}
% \textbf{ii)} the policy evaluation regularizes the policy improvement, which means that CASA does not need any entropy regularization to prevent the policy from collapse.
% \changnan{\textbf{ii)} the policy evaluation regularizes the policy improvement, which encourages exploration and CASA does not need any entropy regularization to prevent the policy from collapse.}

To enable CASA for a large scale off-policy learning, we introduce Doubly-Robust Trace (DR-Trace), which
% CASA is capable of large scale training.
% Due to the fact that large scale training needs off-policy learning, we introduce Doubly-Robust Trace (DR-Trace), which 
exploits doubly-robust estimator ~\citep{dr} and guarantees the synchronous convergence of the state-action values and the state values. 

% We report our score on Arcade Learning Environment (ALE) ~\citep{ale,ale2}.Our method outperforms several strong baselines on 200 million (200M) training scale.

Our main contributions are as follows:
% \changnan{tl dr; todo check this later} \haiyan{random inputs..need revise}
\begin{enumerate}[label=(\roman*),leftmargin=*]
\item We present a novel method CASA which {\colorred enhances gradient consistency} between policy evaluation and policy improvement and present extensive studies on the behavior of the gradients.
% \haiyan{plus DrTrace?} 
% \changnan{not necessary, when using r+V-Q to bootstrap, the dynamic ode of Q is very complicated (haven't solve yet, or cannot be solved in close-form), we just weaken this part }
\item We demonstrate CASA could be freely applied to both policy-based and value-based algorithms with motivating examples.
\item We present extensive empirical study on Atari benchmark , where our conflict averse algorithm brings substantial improvements over the baseline methods. 
\end{enumerate}

\section{Preliminary}
\label{sec:bg}

Consider an infinite-horizon MDP, defined by a tuple $(\mathcal{S}, \mathcal{A}, p, r, \gamma)$, where $\mathcal{S}$ is the state space, $\mathcal{A}$ is the action space, $p: \mathcal{S} \times \mathcal{A} \times \mathcal{S} \to [0, 1]$ is the state transition probability function, $r: \mathcal{S} \times \mathcal{A} \to \mathbb{R}$ is the reward function, and $\gamma$ is the discounted factor.
The policy is a mapping  $\pi: \mathcal{S} \times \mathcal{A} \to [0, 1]$ which assigns a distribution over the action space given a state.

The objective of reinforcement learning is to maximize the \emph{return}, or cumulative discounted rewards,  {\colorred 
\begin{equation}
\mbox{maximize} \ \mathcal{J} = \mathbb{E}_{traj \sim \pi} 
% \mathbb{E}_{s \sim d^\pi} 
\left[ \sum_t \gamma^t r(s_t, a_t)\right],
\end{equation}
where $traj = \{s_0, a_0, r_0, \dots\}$ is a trajectory sampled by $\pi$ with policy-environment interaction.}

Value-based methods maximize $\mathcal{J}$ by estimating various type of value functions: the state value function is defined as {\colorred $V^{\pi}(s) = \mathbb{E}_{\pi}\left[ \sum_t \gamma^t r_{t} | s_0=s \right]$}, the state-action value function is defined as {\colorred $Q^{\pi}(s, a) = \mathbb{E}_{\pi} \left[ \sum_t \gamma^t r_{t} | s_0=s, a_0=a \right]$}; the advantage function is defined as $A^{\pi}(s, a) = Q^{\pi}(s, a) - V^{\pi}(s)$.
% \changnan{bellman equation deleted.}
% The connection between $V^\pi$ and $Q^\pi$ is given by the Bellman equation, 
% $$
% \begin{aligned}
%     \mathcal{T}Q^{\pi} (s, a) &= \mathbb{E}_{s'\sim p(\cdot | s, a)} [r + \gamma V^{\pi}(s')], \\
%     V^{\pi} (s) &= \mathbb{E}_{a \sim \pi(\cdot | s)} [Q^{\pi} (s, a)].
% \end{aligned}
% $$
% where $\mathcal{T}$ is the Bellman Operator.
The objective of maximizing the value functions in value-based methods 
% $\mathcal{J}$ by estimating the state-action value function $Q^\pi$, which 
can be improved through GPI until converging to the optimal policy.
For the approximated state-value function $Q_\theta$ that estimates $Q^\pi$, the policy evaluation is conducted by:{\colorred 
\begin{equation}
\mbox{minimize}\ \mathbb{E}_{\pi}
% \mathbb{E}_{s \sim d^\pi} 
 % \mathop{\mathbb{E}}_{a \sim \pi(\cdot | s)}
%  \mathbb{E}_{a \sim \pi(\cdot | s)} 
 [(Q^\pi(s, a) - Q_\theta (s, a))^2],
 \end{equation}}
where $Q^\pi$ is estimated by various methods, e.g., $\lambda$-return~\citep{Sutton88lambda} and ReTrace~\citep{retrace}.
% \haosen{here are some problems, some value-based method use $G(s,a)$ to estimate $Q^*(s,a)$ such as DQN. they use $\epsilon$-greedy only for exploration.} \haiyan{Added $G(s,a)$, and only greedy in $\epsilon$-greedy is related to policy improvement so don't need to mention $\epsilon$-greedy} 
% The policy evaluation can be achieved by applying gradient ascent on the direction of the gradient $\theta \leftarrow \theta + \eta \mathbb{E}_{\pi} [(
% G- Q_{\theta}) \nabla_{\theta} Q_{\theta}]$, where $\eta$ is the learning rate.
% There are various designs to help approximating $Q^\pi$. 
% A refined structure design is provided by dueling-DQN \citep{dueling_q}. 
% It estimates the state-action value function by the summation of the advantage function and the state value function.
% \citep{retrace} uses ReTrace to estimate $G$, 
% \begin{equation}
% \label{eq:retrace}
%     G=Q^{\Tilde{\pi}} (s_t, a_t) 
% = \textbf{E}_{\mu} [ Q_\theta^\pi(s_t, a_t) + \sum_{k \geq 0} \gamma^k 
% c_{[t+1:t+k]} \delta^{Q}_{t+k} Q ],
% \end{equation}
% where $\delta^{Q}_t Q \overset{def}{=} r_t + \gamma Q_\theta^\pi(s_{t+1}, a_{t+1}) - Q_\theta^\pi(s_t, a_t)$. 
The policy improvement is usually achieved by greedily selecting actions with the highest state-action values.
% selecting a series of actions whose corresponding state-action values are higher. 

Policy-based methods maximize $\mathcal{J}$ by optimizing some parameterized policy $\pi_\theta$ according to the policy gradient theorem \citep{sutton}, 
\begin{equation}
\nabla_\theta \mathcal{J} = % \mathop{\mathbb{E}}_{s\sim d^\pi} % \mathbb{E}_{s\sim d^\pi} 
\mathbb{E}_{\pi} %\mathbb{E}_{a \sim \pi(\cdot | s)}
[\Psi(s, a) \nabla_\theta \log \pi_\theta(a|s)].
\end{equation}
{\colorred The vanilla policy gradient uses $\Psi = \sum_{t=0}^\infty \gamma^t r_{t}$.}
Actor-critic algorithms approximate $\Psi(s,a)$ in the form of baseline, e.g., 
IMPALA~\citep{impala} adopts $\Psi(s, a) = r + \gamma V^{\Tilde{\pi}} (s') - V_{\theta} (s)$ and uses V-Trace to estimate $V^{\Tilde{\pi}}$.

% Let $\rho_t \overset{def}{=} \min\{\frac{\pi_t}{\mu_t}, \Bar{\rho} \}$.
% V-Trace estimates $V(s_t)$ by
% \begin{equation}
% \label{eq:vtrace}
%     V^{\Tilde{\pi}} (s_t) 
%         = \textbf{E}_{\mu} [ 
%         V(s_t) + \sum_{k \geq 0} \gamma^k 
%      c_{[t:t+k-1]} \rho_{t+k}  \delta^{V}_{t+k} V ],
% \end{equation}
% where $\delta^{V}_t V \overset{def}{=} r_t + \gamma V(s_{t+1}) - V(s_t)$. 

\section{Methodology}
\label{sec:casa}

\subsection{Motivation}
\label{sec:motivation}
% \changnan{!!!! The idea is reasonable, but the content is weak after all. This part requires a more formal discussion, or a practical verification.  !!!
% Check back later.}
% For brevity, we abuse the notation and use $\mathbb{E}_\pi$ to represent either $\mathbb{E}_{s \sim d^\pi} \mathbb{E}_{a \sim \pi(\cdot | s)}$ or $\mathbb{E}_{a \sim \pi(\cdot | s)}$ depending on whether $\mathbb{E}_{s \sim d^\pi}$ is needed.
We use $V_{\theta}$ to estimate $V^\pi$, $Q_{\theta}$ to estimate $Q^\pi$ and $\pi_{\theta}$ to represent the policy,
where $\theta$ represents all parameters to be optimized.
{\colorred In this work, there is one backbone and two individual heads after the backbone. 
The advantage function and the policy share one head, and state value function is the other head. 
Hence the policy reuses all parameters of value functions except that temperature $\tau$ is only for the policy.
We keep $\tau$ static in this work.}
% \haosen{or we should point out here that V and Q or pi share most of parameters but not all. a past reviewer misunderstood this.}
We use \textbf{E} to represent the policy evaluation, which gives the ascent direction of the gradient by $\theta \leftarrow \theta + \eta \mathbb{E}_{\pi} [(Q^{\pi} - Q_{\theta}) \nabla_{\theta} Q_{\theta}]$. 
We use \textbf{I} to represent the policy improvement, which gives
% The policy gradient theorem ~\citep{sutton} shows that $\nabla_\theta \mathcal{J} = \mathbb{E}_\pi [(Q^\pi-V_\theta) \nabla_\theta \log \pi_\theta]$, so the gradient ascent with \textbf{I} is defined as 
$\theta \leftarrow \theta + \eta \mathbb{E}_\pi [(Q^\pi-V_\theta) \nabla_\theta \log \pi_\theta]$.

\begin{figure}[t]
\centering 
\vspace{-0.5cm}
\begin{minipage}[t]{0.38\linewidth}
\centering
\includegraphics[width=0.95\linewidth]{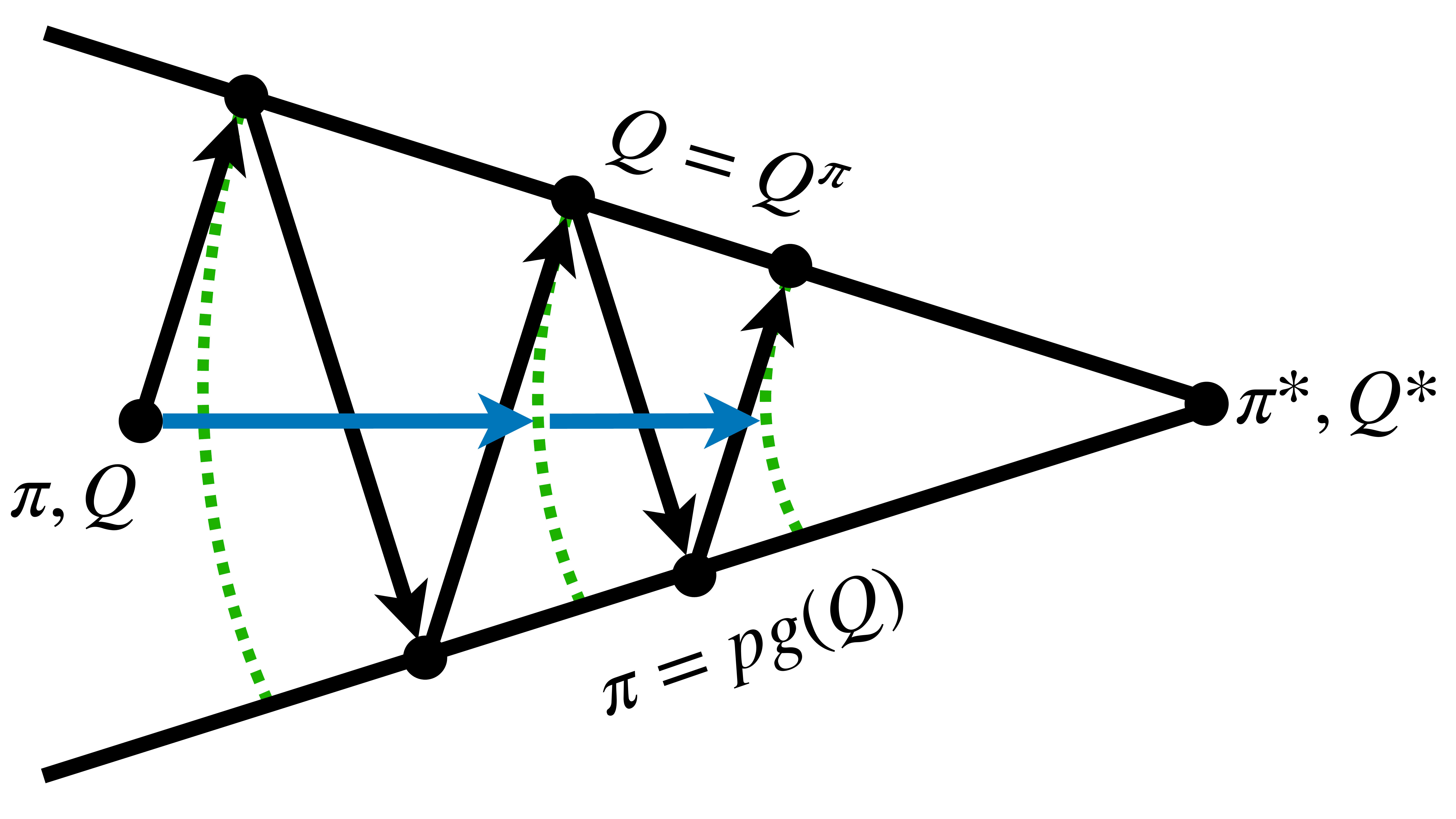}
\caption{ \small 
The GPI process in our work. 
Unlike ~\citep{sutton}, we evaluate $\pi$ by $Q$ instead of $V$, and we improve $\pi$ using policy gradient ascent ($pg$ for brevity) instead of greedy. The learning procedure is shown by the black arrows, i.e., $\textbf{E}\rightarrow\textbf{I} \rightarrow \textbf{E} \rightarrow \textbf{I} \cdots$. 
} 
\label{fig:gpi}
\vskip -0.2in
\end{minipage}
\hfill
\begin{minipage}[t]{0.58\linewidth}
\centering
\includegraphics[width=0.65\linewidth]{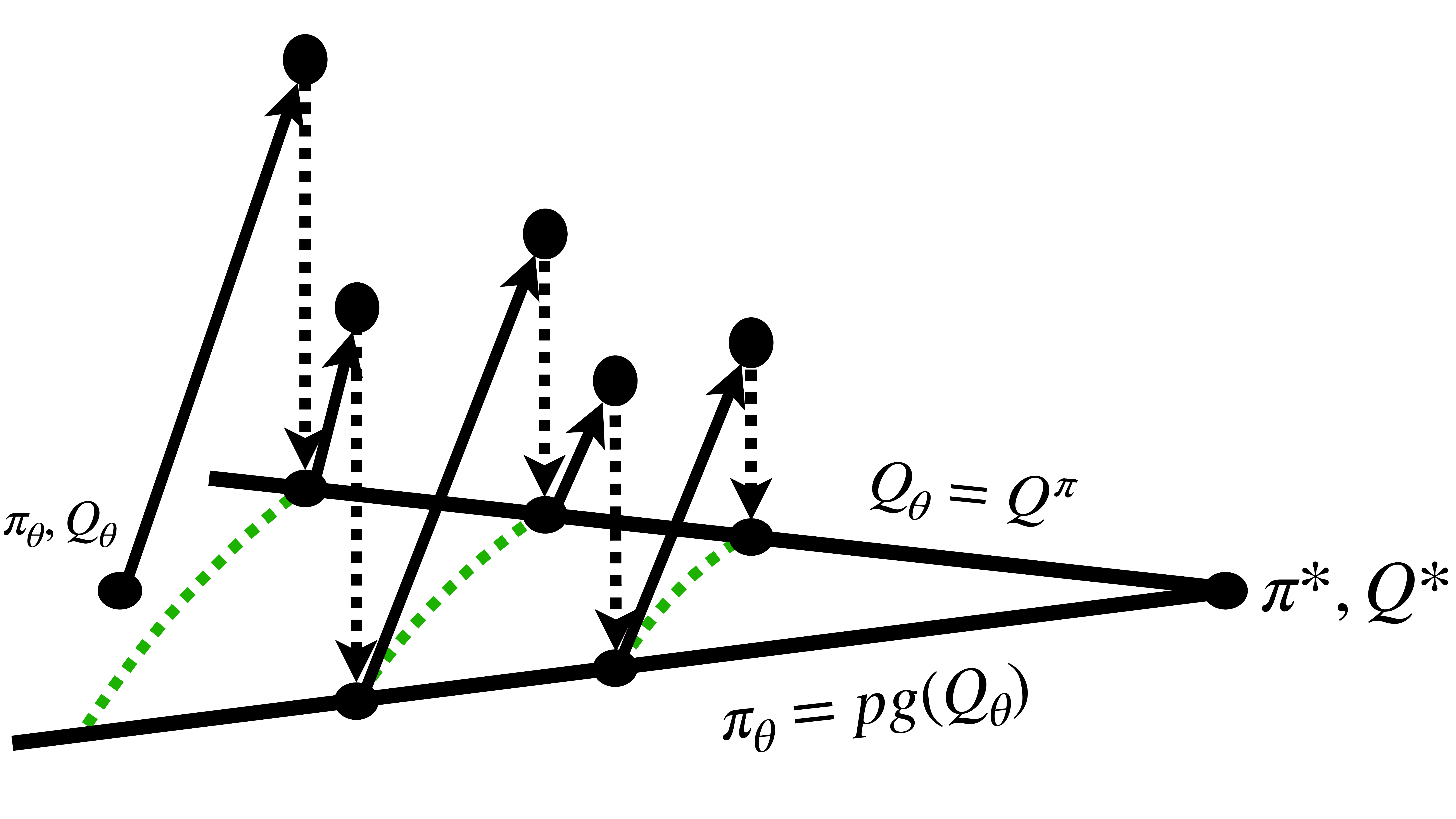}
\caption{\small 
GPI with function approximation. 
% \haiyan{where the ideal policy is desired to select the greedy action with maximum Q value but in reality it's hard to achieve. To minimize the divergence, we project policy and value with operator $p$}.
% \changnan{This description is not accurate. }
Due to the constraint of approximated function space, the ideal policy iteration cannot be actually achieved. The underlying process of GPI with function approximation can be regarded as doing policy improvement and policy evaluation in an ideal space then being projected back into the approximated function space ~\citep{sutton, op_reinforce}.
% The black arrows represent policy improvement and policy evaluation. 
% The dotted arrows represent the projection into the approximated function space.
% The learning process is represented by the black arrows and dotted arrows, $ \textbf{I} \rightarrow \textbf{P}_\textbf{I} \rightarrow \textbf{E} \rightarrow \textbf{P}_\textbf{E} \rightarrow \cdots$.}
%  \haosen{project the ideal objective calculated by equation to a neural network that try to be close to it, see ~\citep{op_reinforce}'s operator D  } 
% \haosen{Here, the projection is a mapping from the ideal function into the neural network support function space.}
 }
\label{fig:gpi2}
\end{minipage}
\vskip -0.1in
\end{figure}

Let's recap the GPI process as shown in Figure \ref{fig:gpi}.
To get rid of the function approximation error, we first assume the approximation function enjoys infinite capacity. 
We use $<x, y>$ to denote the angle between two vectors, where $<x, y> = \arccos (\frac{x\cdot y}{||x||\cdot||y||})$ {\colorred with $\arccos: [-1, 1] \rightarrow [0, \pi]$}. 
We define an important notion $\beta$, which represents the angle between the gradient ascent directions of \textbf{I} and \textbf{E}, as follows,
{\colorred 
\begin{equation}
    \beta \overset{def}{=} <\mathbb{E}_\pi[(Q^\pi-Q_\theta)\nabla_\theta Q_\theta],\, \mathbb{E}_\pi[(Q^\pi-V_\theta) \nabla_\theta \log \pi_\theta]>.
\end{equation}
}
% $\beta \overset{def}{=} <\nabla_\theta Q_\theta,\, \nabla_\theta \log \pi_\theta>$ 
% Since $Q^\pi-Q_\theta$ and $Q^\pi-V_\theta$ are two scalars, when $\nabla_\theta Q_\theta \propto \nabla_\theta \log \pi_\theta$, the two sides of $\beta$ meet.
% As \textbf{I} and \textbf{E} are perpendicular to each side of $\beta$, they become parallel arrows on opposite directions. 
{\colorred When $\beta = 0$ i.e.$\cos(\beta) = 1$, \textbf{I} and \textbf{E} become parallel to each other,  which is the blue arrow in Figure \ref{fig:gpi},
and there is no conflict between the gradient ascent directions of $\textbf{I}$ and $\textbf{E}$ anymore.
When $\beta = \pi / 2$ i.e.$\cos(\beta) = 0$, \textbf{I} and \textbf{E} are perpendicular. 
When $\beta = \pi$ i.e.$\cos(\beta) = -1$, \textbf{I} and \textbf{E} are toward exactly opposite directions. }
% except for the step size that is factored by $Q^\pi-Q_\theta$ and $Q^\pi-V_\theta$. 
% In a special case where $\nabla_\theta Q_\theta \propto \nabla_\theta \log \pi_\theta$ but $Q^\pi-Q_\theta$ and $Q^\pi-V_\theta$ have opposite signs, we have $\beta = \pi$ and we call it $\beta = 0$ since the gradients are still parallel except for their pointing directions.
% For brevity, we denote $\textbf{IE} \overset{def}{=} \textbf{I} \rightarrow \mathbb{E}$.

% \begin{figure}[t]
% \centering
% \includegraphics[width=0.4\textwidth,bb= 0 0 1800 1100]{body/figures/GPI2.pdf}
% \caption{
% GPI with function approximation.
% The black arrows represent the policy improvement and the policy evaluation. 
% The dotted arrows represent the projection into the approximated function space.
% The learning process is represented by the black arrows and dotted arrows, $ \textbf{I} \rightarrow \textbf{P}_\textbf{I} \rightarrow \textbf{E} \rightarrow \textbf{P}_\textbf{E} \rightarrow \cdots$.}
% \label{fig:gpi2}
% \end{figure}

Next, we assume the representation capacity of the approximation function is limited.
When the function approximation is involved, i.e. $Q^\pi$ is estimated by $Q_\theta$ and $\pi$ is approximated by $\pi_\theta$, from the view of operators ~\citep{op_reinforce}, each of \textbf{I} and \textbf{E} can be further decomposed into two operators, as shown in Figure \ref{fig:gpi2}.
One is to do the policy improvement and the policy evaluation, the other is to project into the restricted function space.
% We use $\textbf{P}_{*}$ to represent the projection into the approximated function space of $*$. 
When $\beta > 0$, GPI with function approximation would involve two projection operators in each iteration, which introduces inevitable approximation error.
When $\beta = 0$, if the function approximation error is not considered, we find that the gradient conflict between \textbf{I} and \textbf{E} would be totally eliminated.
If we consider the limitation of the approximation function, similar to the blue arrow in Figure \ref{fig:gpi}, one iteration (represented by two black arrows and two dotted arrows) can be united into one arrow and one dotted arrow (not shown in Figure \ref{fig:gpi2} but analogy to the blue arrow in Figure \ref{fig:gpi}), where the gradient conflict is eliminated and the two projection operators are reduced to one correspondingly.

As stated above, if $\beta = 0$ holds, we can expect that the gradient conflict between the policy improvement and the policy evaluation is eliminated and the function approximation error could be reduced. 
{\colorred However, $\beta$ is usually estimated by sampling with stochasticity. 
It's difficult to let $\beta = 0$ by optimizing $\theta$. 
Instead, we consider another notion $\chi$ by removing step sizes and taking expectation outside, where the angle of each state is fully controllable by $\theta$.
\begin{equation}
    \chi \overset{def}{=} \mathbb{E}_\pi [\cos <\nabla_\theta Q_\theta, \nabla_\theta \log \pi_\theta>].
\end{equation}
In fact, $\chi$ is highly correlated to compatible value function \citep{sutton1999policy},
and Theorem \ref{thm:connect_cond} shows that $\chi = 1$ is the necessary condition for the compatible condition $\nabla_\theta Q_\theta = \nabla_\theta \log \pi_\theta$, which is a weaker compatible condition.
More details about compatible value function are in Appendix \ref{app:comp_v}.
}

% Note that the importance of $\beta = 0$ has been considered by some earlier works, such as  
% the policy gradient theorem with function approximation \citep{sutton1999policy}. 
% % Considering the significance, 
% Further details on the theorem with notations could be found in Theorem \ref{thm:pg_fa} from Appendix \ref{app:mtv} .
\begin{figure}[t!]
	\centering
% 	\vskip 0.2in
	\begin{minipage}[c]{0.75\textwidth}
\includegraphics[width=\linewidth]{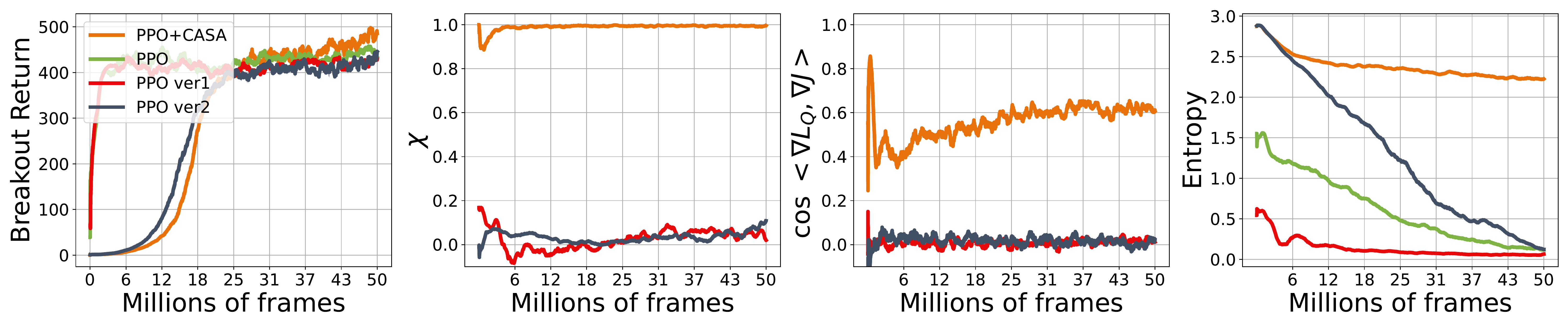}
\includegraphics[width=\linewidth]{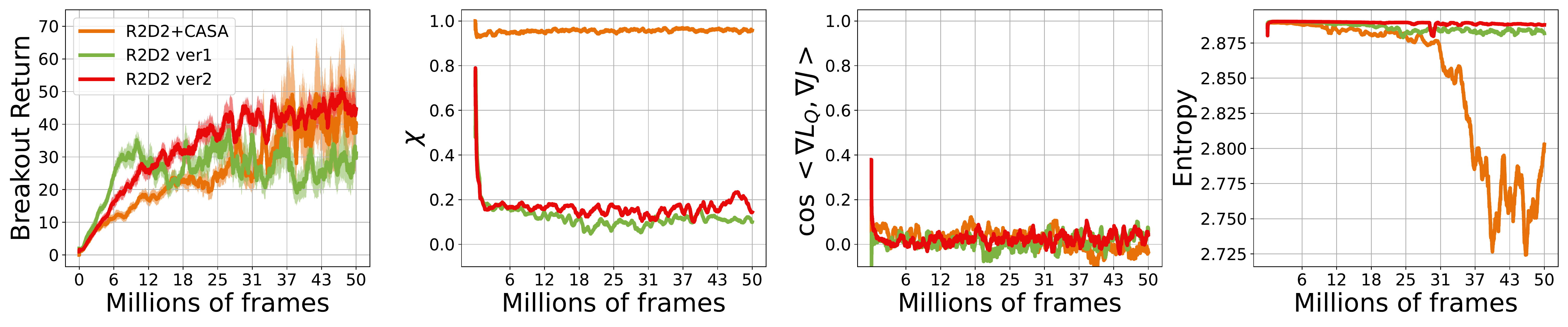}
  \end{minipage} \hfill
  \begin{minipage}[c]{0.23\textwidth}
    \caption{
    \small
    \emph{Return}, $\chi$, $\cos(\beta)$ and \emph{entropy}. PPO is adjusted with two additional versions to evaluate state-action values. R2D2 uses a surrogate policy to approximate policy gradient. Entropy of R2D2 is entropy of Boltzmann policy on state-action values. 
    %Angles between policy gradient and evaluating state values for PPO, and angles between evaluating state-action values and state values for R2D2 are both positive. 
    Details are in Appendix \ref{app:mtv}.}
    \label{fig:mtv}
    \end{minipage}
\end{figure}

To further understand the behavior of $\beta$ {\colorred and $\chi$}, we track $\cos(\beta)$ {\colorred and $\chi$} of two algorithms PPO and R2D2  as representatives for policy-based and value-based methods, respectively. We show an important fact in Figure \ref{fig:mtv} that {\colorred both $\chi$ and $\cos(\beta)$ are statistically positive for both original version and adjusted versions,
% This is reflected by the fact that cosines of angles between policy gradient and gradient of evaluating state values for PPO are always positive, and cosines of angles between gradients of evaluating state-action values and state values for R2D2 are always positive. 
which means that $\arccos(\chi)$ and $\beta$ are likely to be less then $\pi / 2$ with neural network approximated functions.}
% More details on this motivating experiment can be found in Appendix \ref{app:mtv}. 
% \jjf{need to polish the caption} 
% \changnan{make an example}
% Each value-based method and policy-based method has its own merits and shortcomings.
% This motivates us to find a trade-off between value-based methods and policy-based methods to inherit their merits and to mitigate their drawbacks.
The aforementioned conceptual and empirical findings  inspire us to raise the following question on GPI:
% Although MaxEnt suffers from some drawbacks, it's pretty close to unify value-based methods and policy-based methods. 
% One observation is that $\mathcal{J}_{soft}$ involves $\textbf{H}[\pi]$ into $\mathcal{J}$, which bridges the policy evaluation and the policy improvement.
% When we restrict the objective back to $\mathcal{J}$, it's reasonable to conjecture that the policy evaluation and the policy improvement should be different, i.e. 
% where $\nabla \mathcal{J} = \mathbb{E}_\pi [(G - V) \nabla \log \pi]$. 
% As $G-V$ and $G-Q$ represent the step size of the policy improvement direction $\nabla \log \pi$ and the policy evaluation direction $\nabla Q$ respectively, and $G-V \neq G-Q$, we won't expect the policy evaluation and the policy improvement to be equivalent, as long as we want to $maximize\ \mathcal{J}$.
% But inspired by the operator view ~\citep{op_reinforce}, we conjecture a \textit{weaker} consistent property that the policy improvement might be compatible with the policy evaluation: 
% under what condition, the policy improvement and the policy evaluation could share the same ascent direction in their gradients, i.e. $\nabla_\theta Q_\theta \propto \nabla_\theta \log \pi_\theta$?
{\colorred whether we can guarantee $\chi = 1$, so that $\cos(\beta)$ is also closer to $1$.}

% \begin{figure}[ht]
% \centering
% \includegraphics[height=4.5cm]{figure/CASA.pdf}
% \caption{
% \textbf{Black} lines represent the forward process.
% \textbf{Dotted} black lines represent the \textit{stop gradient} operator in the forward process. 
% \textbf{Colorful} lines represent backpropagation from different loss functions. 
% Specifically, \textbf{blue} lines represent $\mathbb{E}_\pi [(G-V)\nabla \log \pi]$, 
% \textbf{orange} lines represent $\mathbb{E}_\pi [(G-Q)\nabla Q]$,
% and \textbf{green} lines represent $\mathbb{E}_\pi [(G-V)\nabla V]$.}
% \label{fig:casa}
% \end{figure}

\subsection{Formulation}
\label{sec:formula}

Denote $\tau \in \mathbb{R}_+$ to be a positive temperature and $sg$ to be a $stop\ gradient$ operator. 
CASA can estimate $V_\theta$ and $A_\theta$ by any function parameterized by $\theta$, where $\pi_\theta$ and $Q_\theta$ are derived as follows:
\begin{equation}
\label{eq:casa}
\left\{
    \begin{aligned}
        &\pi_\theta(\cdot|s) = \text{softmax}(A_\theta(s, \cdot) /
        \tau), \\
        % &\Bar{A}_\theta(s, a) &= A_\theta(s, a) - \mathbb{E}_{\pi}[A_\theta] \\
        &\Bar{A}_\theta(s, a) = A_\theta(s, a) - \sum_{a'} sg(\pi_\theta(a'|s)) A_\theta(s, a'), \\
        % \footnotemark
        &Q_\theta(s, a) = \Bar{A}_\theta(s, a) + sg(V_\theta(s)).
    \end{aligned}
\right. 
% \addtocounter{footnote}{-1}
% \footnotetext{$/$ represents element-wise multiplication.}
% \stepcounter{footnote}
% \footnotetext{$\mathbb{E}_{\pi}[A_\theta] = sg(\pi) \cdot A_\theta$, where $\cdot$ represents inner product.}
\end{equation}

% We provide a summary of the structure as shown in Appendix Figure \ref{fig:casa}.
Note that there exist two $sg$ operators in \eqref{eq:casa}.
The first $sg$ operator is used for computing advantage as $\Bar{A}_\theta = A_\theta - \mathbb{E}_\pi [A_\theta] = A_\theta - sg(\pi_\theta) \cdot A_\theta$,
% This $sg$ operator guarantees the path consistency between the policy improvement and the policy evaluation, 
% shown later in \eqref{eq:vannila_grad}.
where the $sg$ operator here guarantees the gradients between policy improvement and policy evaluation are parallel, which we elaborate later.
Intuitively, this $sg$ operator also means that we keep $\pi_\theta$ unchanged when evaluating the policy $\pi_\theta$.
The second $sg$ operator exists in $Q_\theta = \Bar{A}_\theta + sg(V_\theta)$.
% This $sg$ operator is crucial in the convergence proof of Theorem \ref{thm:dr}.
As ~\citep{simsiam} regards $sg$ in siamese representation learning as a case of EM-algorithm ~\citep{em}, a similar interpretation exists here.
$Q_\theta = \Bar{A}_\theta + sg(V_\theta)$ decomposes the estimation of $Q_\theta$ into a two stage problem, where the first is to estimate the advantage of each action without changing the expectation, the second is to estimate the expectation.
% More details are concluded in the proof of Theorem \ref{thm:dr}.
% \ifnum2=0\mytestvar

The \eqref{eq:casa} includes a straightforward refinement of dueling-DQN.
% We know dueling-DQN estimates $Q_\theta$ by $Q_\theta =  A_\theta + V_\theta$, which corresponds to:
% $$
% Q^{\pi} (s_t, a_t) = \mathbb{E}_p [r_t + \gamma 
%     V^{\pi}(s_{t+1}) - V^{\pi}(s_t)] + V^{\pi}(s_t).
% $$
% Since $\mathbb{E}_{\pi} [ \mathbb{E}_p [r_t + \gamma V^{\pi}(s_{t+1}) - V^{\pi}(s_t)]] = 0$, we have
% $$
% \begin{aligned}
%     Q^{\pi} (s_t, a_t) 
%     =  \mathbb{E}_p [r_t + \gamma V^{\pi}(s_{t+1}) - V^{\pi}(s_t)]
%      - \mathbb{E}_\pi [ \mathbb{E}_p [r_t + \gamma V^{\pi}(s_{t+1}) - V^{\pi}(s_t)]] + V^{\pi}(s_t),
% \end{aligned}
% $$
% which is $Q_\theta = A_\theta - \mathbb{E}_\pi [A_\theta] + V_\theta$.
{\colorred We know dueling-DQN estimates $Q^\pi$ by $Q_\theta =  A_\theta + V_\theta$, but it cannot guarantee $\mathbb{E}_\pi[A_\theta] = 0$ i.e. $\mathbb{E}_\pi [Q_\theta] = V_\theta$ due to the function approximation error. 
But if we estimate $Q_\pi$ by $Q_\theta =  A_\theta - \mathbb{E}_\pi[A_\theta] + V_\theta$, it satisfies the necessary condition $\mathbb{E}_\pi [Q_\theta] = V_\theta$ without loss of generality.}

\subsection{Path Consistency Between Policy Evaluation and Policy Improvement}
\label{sec:equiv}

% In section \ref{sec:dr}, policy evaluation and policy improvement are actually separated, entangled structure of $\pi$ and $Q$ defined in \eqref{eq:casa} is not involved.
For brevity, we omit $\theta$ and $V, Q, A, \pi$ are all approximated functions.
% Now we discuss how the policy evaluation and the policy improvement affect each other.
Denote the estimations of $V$ and $Q$ as $V^{\Tilde{\pi}}$ and $Q^{\Tilde{\pi}}$ respectively. For instance, one choice is to calculate $V^{\Tilde{\pi}}$ and $Q^{\Tilde{\pi}}$ by V-Trace ~\citep{impala} and ReTrace ~\citep{retrace} respectively.

At training time, the policy evaluation is achieved by updating $\theta$ to minimize,
$$
\begin{aligned}
    L_V(\theta) = \mathbb{E}_\pi [ (V^{\Tilde{\pi}} - V)^2 ], \ 
    L_Q(\theta) = \mathbb{E}_\pi [ (Q^{\Tilde{\pi}} - Q)^2 ],
\end{aligned}
$$ 
which gives the ascent direction of $\theta$ by:
\begin{equation}
\label{eq:grad_qv}
    \begin{aligned}
        \nabla_\theta L_V(\theta)
        = \mathbb{E}_\pi \left[ (V^{\Tilde{\pi}} - V) \nabla_\theta V \right], \ 
        \nabla_\theta L_Q(\theta)
        = \mathbb{E}_\pi \left[ (Q^{\Tilde{\pi}} - Q) \nabla_\theta Q \right].
    \end{aligned}
\end{equation}
And we make the policy improvement by policy gradient, which gives the ascent direction of $\theta$ by: 
\begin{equation}
\label{eq:grad_pi}
\begin{aligned}
    \nabla_\theta \mathcal{J}(\tau, \theta) = \mathbb{E}_\pi \left[\tau (Q^{\Tilde{\pi}}  - V ) \nabla_\theta \log \pi \right],
\end{aligned}
\end{equation}
where $\mathcal{J} (\tau, \theta) = \tau \mathbb{E}_\pi [\sum \gamma^t r_t]$.
It takes an additional $\tau$, which frees the scale of gradient from $\tau$.

The final gradient ascent direction of $\theta$ is given by:
\begin{equation}
    \label{eq:grad_all}
    \alpha_1 \nabla_\theta L_V + \alpha_2 \nabla_\theta L_Q + \alpha_3 \nabla_\theta \mathcal{J}.
\end{equation}

With $(V, Q, \pi)$ defined in \eqref{eq:casa}, 
by Lemma \ref{lemma_app:vannila_grad}, we have, 
\begin{equation}
    \label{eq:vannila_grad}
\nabla_\theta Q =  (\textbf{1} - \pi) \nabla_\theta A = \tau \nabla_\theta \log \pi .
\end{equation}
For brevity, denote the shared gradient path as
$\textbf{g} = (\textbf{1} - \pi) \nabla_\theta A.$

Plugging \eqref{eq:vannila_grad} into \eqref{eq:grad_qv} \eqref{eq:grad_pi}, we have,
\begin{equation}
\label{eq:grad_qv_simple}
\begin{aligned}
        \nabla_\theta L_Q = \mathbb{E}_\pi \left[ (Q^{\Tilde{\pi}} - Q) \textbf{g} \right], 
\nabla_\theta \mathcal{J} = \mathbb{E}_\pi \left[ (Q^{\Tilde{\pi}} - V) \textbf{g} \right].
% \footnotemark
\end{aligned}
% \footnotetext{$Q_t^{\Tilde{\pi}} = Q_{DR}^{\Tilde{\pi}} (s_t, a_t)$, $Q_t = Q(s_t, a_t)$, $V_t = V(s_t)$.}
\end{equation}
% \changnan{one paragraph deleted.}
% Recalling Theorem \ref{thm:dr}, when the importance sampling ratio $\Bar{\rho} = +\infty$ and we ignore the bias introduced by the function approximation error, we know $Q_t^{\Tilde{\pi}}$ is an unbiased estimation of $\mathbb{E}_\pi [G_t | s_t, a_t]$.
% Based on this observation, $\nabla_\theta L_Q$ is a typical policy evaluation that $Q$ converges to $\mathbb{E}_\pi [\sum \gamma^k r_{k} | s, a]$, 
% and $\nabla_\theta \mathcal{J}$ is a typical policy improvement that maximizes $\mathbb{E}_\pi [\sum \gamma^k r_k]$. 
% $\nabla_\theta L_Q$ and $\nabla_\theta \mathcal{J}$ are expected to be different, as $Q$ and $V$ are updated by the Bellman equation,
% rather than the \textit{soft} Bellman equation.
{\colorred By \eqref{eq:grad_qv_simple}, $\nabla_\theta L_Q$ and $\nabla_\theta \mathcal{J}$ walk along the same vector direction of gradient path $\textbf{g}$ for each state.}
% By \eqref{eq:grad_qv_simple}, the directions of the gradients $\textbf{g}$ are in common. 
% We call it a path. 
% The \eqref{eq:grad_qv_simple} shows that $\nabla_\theta L_Q$ and $\nabla_\theta \mathcal{J}$ walk along the same path $\textbf{g}$. 
% The difference is that $\nabla_\theta L_Q$ walks with step size $Q^{\Tilde{\pi}} - Q$, but $\nabla_\theta \mathcal{J}$ walks with step size $Q^{\Tilde{\pi}} - V$. 
{\colorred By \eqref{eq:vannila_grad}, this is exactly the case $\chi = 1$.}
Since all parameters to estimate $Q$ and $\pi$ are shared except for $\tau$, we call it \textbf{C}ritic \textbf{AS} an \textbf{A}ctor.

If we make a subtraction between $\nabla_\theta L_Q$ and $\nabla_\theta \mathcal{J}$, we have,
\begin{equation}
\label{eq:grad_relation}
    \nabla_\theta \mathcal{J} = \nabla_\theta L_Q + \mathbb{E}_\pi \left[(Q - V) \textbf{g} \right].
\end{equation}
% We know $\mathbb{E}_\pi \left[ (Q_t - V_t) \textbf{g} \right]$ is also a policy gradient with function approximated $Q_t$. 
We know $\mathbb{E}_\pi \left[ (Q - V) \textbf{g} \right]$ is a self-bootstrapped policy gradient with function approximated $Q$.
% \changnan{one paragraph deleted.}
% If we regard $Q_t^{\Tilde{\pi}}$ as a \textit{better} estimation of $Q_t$, \eqref{eq:grad_relation} says that the policy evaluation equals to the policy improvement minus a \textit{worse} policy improvement, which is to retain the policy evaluation by pulling back from the policy improvement. 
% As for how \textit{worse} it is, it depends on how far $Q_t$ is from $Q_t^{\Tilde{\pi}}$. 
% Recalling \eqref{eq:grad_qv_simple} and $\textbf{g} = \tau \nabla_\theta \log \pi$, we observe that if $Q_t$ underestimates $Q_t^{\Tilde{\pi}}$, $\nabla_\theta L_Q$ promotes $\pi(a_t|s_t)$ by $Q_t^{\Tilde{\pi}} - Q_t$, otherwise $\nabla_\theta L_Q$ diminishes $\pi(a_t|s_t)$ by $Q_t^{\Tilde{\pi}} - Q_t$. 
% So except for the policy evaluation, $\nabla_\theta L_Q$ does do policy improvement along the same path as $\nabla_\theta \mathcal{J}$, but only to compensate the estimation error of $Q_t$.
% If we regard $Q^{\Tilde{\pi}}$ as a \textit{better} estimation of the ground truth state-action values than $Q$, the \eqref{eq:grad_relation} says that the policy improvement equals to the policy evaluation plus a \textit{worse} policy improvement, where the \textit{worse} policy improvement entirely depends on the evaluated values. 
Recalling the fact that the value-based methods improves the policy by greedily selecting actions according to $Q$, if we apply $\nabla_\theta \mathcal{J}$ on $\theta$, it additionally utilizes $Q$ to do policy improvement. 
This is a more greedy usage of $Q$ to improve policy than its usual usage. 

% \changnan{one lemma deleted.}
% \begin{lemma}
% Define $\Bar{A}= A - \mathbb{E}_\pi[A]$, $Q = \Bar{A} + sg(V), \pi = softmax(A / \tau)$, then 
% $
% \mathbb{E}_\pi \left[ (Q - V) \nabla \log \pi \right]
% = - \tau \nabla \textbf{H}[\pi].
% $
% \label{lemma:eqiv_pg_ent}
% \end{lemma}
% \begin{proof}
% See Appendix \ref{app:proof}, Lemma \ref{lemma_app:eqiv_pg_ent}.
% \end{proof}

% Now we know $\nabla_\theta L_Q$ is not only a policy evaluation but is also equivalent to a compensational policy improvement. 
If we exploit the structural information as $(V, Q, \pi)$ defined by \eqref{eq:casa}, by Lemma \ref{lemma_app:eqiv_pg_ent},
$$
\mathbb{E}_\pi \left[(Q - V) \textbf{g} \right] 
= \tau \mathbb{E}_\pi \left[(Q - V) \nabla_\theta \log \pi \right]
= - \tau^2 \nabla_\theta \textbf{H}[\pi],
$$
then we have,
\begin{equation}
\label{eq:ent_reg}
    \nabla_\theta L_Q 
    = \nabla_\theta \mathcal{J} + \tau^2 \nabla_\theta \textbf{H}[\pi].
\end{equation}

The \eqref{eq:ent_reg} shows $\nabla_\theta L_Q$ is a policy gradient with an entropy regularization.
If we apply $\nabla_\theta L_Q$ on $\theta$ for policy-based methods, an entropy regularization works implicitly by $\alpha_2 \nabla_\theta L_Q$ in \eqref{eq:grad_all}, which prevents the policy collapse to a sub-optimal solution. 
% An intuitive explanation is that, as $Q$ and $\pi$ share $A$, but the range of $Q$ is dominated by MDP, so $Q$ regularizes $A$ to prevent $\pi$ from collapse to sub-optimal solutions.

\subsection{DR-Trace and Off-Policy Training}
\label{sec:dr}

% In the previous section, we introduce how we estimate $(V, Q, \pi)$. 
% For brevity, we omit $\theta$ and $V, Q, A, \pi$ are all approximated functions.
% In this section, we introduce how we learn $(V, Q, \pi)$.

% \changnan{one page moved to appendix.}

\begin{table}[ht!]
 \vskip -0.05in
    \centering
    \vspace{0.1cm}
    \scalebox{0.82}{
    \begin{tabular}{c|c|c}
    \toprule
    
     & \text{{\color{blue}DR-Trace}} & \text{{\color{orange}V-Trace} / {\color{green}ReTrace}} \\
     \midrule
      & $\delta^{{\color{blue}DR}}_t {=} r_t + \gamma {\color{blue}V}(s_{t+1}) - {\color{blue}Q}(s_t, a_t)$ & $\delta^{{\color{orange} V}/{\color{green}Q}}_t {=} r_t + \gamma {\color{orange} V}(s_{t+1})/{\color{green}Q}(s_{t+1},a_{t+1}) - {\color{orange} V}(s_t)/{\color{green}Q}(s_t,a_t)$ \\
    \midrule
    $V^{\Tilde{\pi}}$ & $\mathbb{E}_{\mu} [V_t + \sum_{k \geq 0} \gamma^k c_{[t:t+k-1]} \rho_{t+k}  {\color{blue}\delta^{DR}_{t+k}}]$ & $\mathbb{E}_{\mu} [V_t + \sum_{k \geq 0} \gamma^k c_{[t:t+k-1]} \rho_{t+k}  {\color{orange}\delta^{V}_{t+k}} ]$ \\
    \midrule
    $Q^{\Tilde{\pi}}$  & $\mathbb{E}_{\mu}   [Q_t + \sum_{k \geq 0} \gamma^k {\color{blue}c_{[t+1:t+k-1]} ( 1_{\{k=0\}} + 1_{\{k > 0\}} \rho_{t+k}) \delta^{DR}_{t+k}} ]$ & $\mathbb{E}_{\mu}   [Q_t + \sum_{k \geq 0} \gamma^k {\color{green}c_{[t+1:t+k]} \delta^{Q}_{t+k}} ]$  \\ 
    \midrule
    $\nabla \mathcal{J}$ & $\mathbb{E}_{\mu} [\rho_t({\color{blue}Q_t^{\Tilde{\pi}}} - V_t)\nabla \log \pi]$ &  $\mathbb{E}_{\mu} [\rho_t({\color{orange}r_t + V_{t+1}^{\Tilde{\pi}}} - V_t)\nabla \log \pi]$ \\
    \bottomrule 
    \end{tabular}
    }
    \caption{Comparison between DR-Trace and V-Trace/ReTrace.}
    \label{tab:drtrace}
 \vskip -0.05in
\end{table}

{\colorred To enable off-policy training with behavior policy $\mu$}, one choice is to estimate $V^{\Tilde{\pi}}$ and $Q^{\Tilde{\pi}}$ in \eqref{eq:grad_qv} and \eqref{eq:grad_pi} by V-Trace and ReTrace. 
As CASA estimates $(V, Q, \pi)$, applying Doubly Robust ~\citep{dr} is feasible and suitable.
We propose DR-Trace and find the convergence rate and the fixed point of DR-Trace are the same as V-Trace's according to its convergence proof. 
For completeness, we provide DR-Trace and its comparison with V-Trace/ReTrace in Table \ref{tab:drtrace}. 
% Notations follow that of ~\citep{impala}'s, except for 
% $\Tilde{\rho}_{t, k} =  1_{\{k=0\}} + 1_{\{k > 0\}} \rho_{t+k}$, 
% $\delta^{DR}_t {=} r_t + \gamma V(s_{t+1}) - Q(s_t, a_t)$,
% $\delta^{V}_t {=} r_t + \gamma V(s_{t+1}) - V(s_t)$ and 
% $\delta^{Q}_t {=} r_t + \gamma Q(s_{t+1}, a_{t+1}) - Q(s_t, a_t)$.
More details are in Appendix \ref{app:drtrace}.

\section{Experiments}
\label{sec:experiments}

% \small

\normalsize

% \changnan{one paragraph deleted.}
% In this section, we firstly introduce our basic setup.
% Then we report our results on ALE, namely, 57 atari games. 
% To further investigate how CASA works, we study the effects of \textit{sg} operators, DR-Trace and path consistency.

% We also fix some game bugs, shown in Appendix \ref{app:bug_fix}.

\subsection{Basic Setup}
\label{sec:basic_setup}

We employ a Learner-Actor pipeline \citep{impala} for large-scale training.
% Our backbone is \textit{IMPALA,deep} \citep{impala}.
{\colorred Motivation and ablation experiments on PPO and R2D2 don't use LSTM, only experiments on CASA+DR-Trace use LSTM \citep{lstm}, which is for comparison with other algorithms.}
We use \textit{burn-in} \citep{r2d2} {\colorred when LSTM is used.}
All estimated values share the same backbone, which is followed by two fully connected layers for each individual head.
% We store recurrent states during inference and make it the initial hidden state of \textit{burn-in} phase.
% We train each sample twice.
% We replay each data point twice, by a smooth \textit{burn-in} shift.
% For any trajectory, we break up the trajectory into segments by an offset $\lfloor$\textit{burn-in} / 2$\rfloor$, where each data point is trained exactly twice.
We use no intrinsic reward and no entropy regularization in any experiment.
We find that using life information can greatly increase the performance of some games. 
However, to be general, we will not end the episode if life is lost.
All hyperparameters are in Appendix \ref{app:hyperparameters}.

{\colorred For brevity, we denote $\nabla L_V = \mathbb{E}_\pi [(V^\pi - V_\theta)\nabla V_\theta]$, $\nabla L_Q = \mathbb{E}_\pi [(Q^\pi - Q_\theta)\nabla Q_\theta]$ and $\nabla \mathcal{J} = \mathbb{E}_\pi [(Q^\pi - V_\theta)\nabla \log \pi_\theta]$, where expectation is batch-wise average in our implementation. 
When we write $<a, b>$ with $a, b \in \{\nabla L_V, \nabla L_Q, \nabla \mathcal{J}\}$, we firstly calculate batch-wise averaged gradient of $a$ and $b$, then we calculate the angle in-between. 
When we write $\cos<\nabla Q, \nabla \log \pi>$ or $\chi$, we mean $\mathbb{E}_\pi [\cos <\nabla_\theta Q_\theta, \nabla_\theta \log \pi_\theta>]$, which firstly calculates element-wise cosines and then takes batch-wise average.
To avoid numerical problem, we calculate $\frac{x\cdot y}{||x||\cdot||y||}$ by $\frac{x\cdot y}{\max(||x||, 10^{-8})\cdot \max(||y||, 10^{-8})}$.
}

\subsection{Application of CASA on Representative Algorithms}
\label{sec:on_ppo_and_r2d2}

\begin{table}[ht!]
    \centering
    \scalebox{0.86}{
    \begin{math}
        \begin{array}{c|ccc|ccc}
    \toprule
    & \text{PPO} &  & \text{PPO+CASA} & \text{R2D2} & & \text{R2D2+CASA} \\
    \midrule
    & & \multirow{4}{*}{$\Rightarrow$} & (V, A) = (V_\theta, A_\theta) & & \multirow{4}{*}{$\Rightarrow$} & (V, A) = (V_\theta, A_\theta) \\
    \text{Func.} & (V, logit) = (V_\theta, logit_\theta) & & \pi = \text{softmax}(A/\tau) & (V, A) = (V_\theta, A_\theta) & & \pi = \text{softmax}(A/\tau) \\
    \text{Approx.} & \pi = \text{softmax}(logit)& & \Bar{A} = A - sg(\pi) \cdot A & Q = A + V & & \Bar{A} = A - sg(\pi) \cdot A \\
    & & & Q = \Bar{A} + sg(V) & & & Q = \Bar{A} + sg(V) \\
    \midrule
    \text{Gradient} & 0.5 \nabla L_V + \nabla \mathcal{J} & \Rightarrow & 0.5 \nabla L_V + \nabla L_Q + \nabla \mathcal{J}  & \nabla L_Q & \Rightarrow & 0.5 \nabla L_V + \nabla L_Q + \nabla \mathcal{J} \\
    \bottomrule 
    \end{array}
    \end{math}
    }
    \caption{Examples of applying CASA on policy-based methods (PPO) and value-based methods (R2D2).}
    \label{tab:on_ppo_and_r2d2}
    \vskip -0.1in
\end{table}

CASA is applicable to existing algorithms. 
We take PPO and R2D2 for demonstration. 
The application of CASA on PPO is straightforward. 
Applying CASA on R2D2 is impossible as either $\epsilon$-greedy policy or $\arg\max Q$ policy breaks the gradient. 
This problem is the same as calculating the gradients of policy improvement for value-based methods.
We use a surrogate policy $\pi_{surrogate} = \text{softmax}(A / \tau)$, which is discussed in Appendix \ref{app:mtv}. 
Table \ref{tab:on_ppo_and_r2d2} summarizes adjustments of function approximations and training gradients. 

Since PPO+CASA and R2D2+CASA have the same function approximation, recalling the fact that value-based methods improve the policy when a more accurate evaluation is achieved and policy-based methods improve the policy for every step, we can balance the two flexibly with $\chi=1$ by $\alpha_1, \alpha_2, \alpha_3$ in \eqref{eq:grad_all}.

In Figure \ref{fig:mtv}, algorithms with CASA show much higher $\cos(\beta)$ and $\chi$. 
PPO+CASA does more exploration than the original PPO, as the entropy of $\pi$ doesn't easily drop to zero. 
R2D2+CASA tends to distinct the state-action values, where we use the entropy of $Q$ to measure how greedy the current state-action values are. % here we should show what is the entropy of Q
% We also include more angles that are interesting in Appendix \ref{app:mtv}.

\subsection{Behavior of Gradients on different structures}
\label{sec:ablation}

\begin{table}[ht!]
\centering
\begin{minipage}[b]{0.4\textwidth}
\scalebox{0.88}{
\begin{math}
\begin{array}{c|c}
\toprule
\text{PPO+CASA} & Q = A_\theta - sg(\pi_\theta) \cdot A_\theta + sg(V_\theta) \\
\midrule
\text{type 1} & Q = A_\theta - \pi_\theta \cdot A_\theta + sg(V_\theta) \\
\hline
\text{type 2} & Q = A_\theta - sg(\pi_\theta) \cdot A_\theta + V_\theta  \\
\hline
\text{type 3} &  Q = A_\theta + sg(V_\theta)  \\ 
\hline
\text{type 4} & Q = A_\theta + V_\theta  \\ 
\hline
\text{type 5} & Q = Q_\theta  \\
\bottomrule
\end{array}
\end{math}
}
\end{minipage}\hfill
\begin{minipage}[c]{0.56\textwidth}
\setlength{\abovecaptionskip}{0pt}%
\caption{\small Behavior of gradient on different types. Type 1$\&$2 are CASA-like structures, where type 1 removes $sg$ of $\pi$ and type 2 removes $sg$ of $V_\theta$. Type 3$\&$4 are dueling-like structures, where type 3 adds $sg$ to $V$ for dueling-Q and type 4 is dueling-Q. Type 5 uses a new head to estimate $Q_\theta$ separately, which can be considered as an auxiliary task to estimate $Q^\pi$. }
\label{tab:ablation_parameters}
\end{minipage}
\end{table}

Though we show that CASA satisfies $\nabla Q \propto \nabla \log \pi$, {\colorred which means $\chi = 1$}, it's unknown if the structure of CASA is unique. 
As $Q = A - \mathbb{E}_\pi[A] + sg(V)$ is a direct refinement of dueling-DQN, we try several different structures of PPO+CASA. 
All settings of estimating state-action values are shown in Table \ref{tab:ablation_parameters}. 
We always use $0.5 \cdot \nabla L_V + \nabla L_Q + \nabla \mathcal{J}$ as the training gradient. 
We present Breakout and Qbert in Figure \ref{fig:ablation}.
% \haosen{Noted, we may choose other games with higher performance.} 
% {\colorred We observe $\chi$ and cosines of angles between $\nabla L_V$, $\nabla L_Q$ and $\nabla \mathcal{J}$.}

% \begin{table}[ht!]
% \centering
% \scalebox{0.90}{
% \begin{math}
%     \begin{array}{c|c}
% \toprule
% \text{PPO+CASA} & Q = A_\theta - sg(\pi_\theta) \cdot A_\theta + sg(V_\theta) \\
% \midrule
% \text{type 1} & Q = A_\theta - \pi_\theta \cdot A_\theta + sg(V_\theta) \\
% \hline
% \text{type 2} & Q = A_\theta - sg(\pi_\theta) \cdot A_\theta + V_\theta  \\
% \hline
% \text{type 3} &  Q = A_\theta + sg(V_\theta)  \\ 
% \hline
% \text{type 4} & Q = A_\theta + V_\theta  \\ 
% \hline
% \text{type 5} & Q = Q_\theta  \\
% \bottomrule
% \end{array}
% \end{math}
% }
% \caption{Behavior gradient types. Type 1 and type 2 are CASA-like structures, where type 1 removes $sg$ of $\pi$ and type 2 removes $sg$ of $V_\theta$. Type 3 and type 4 are dueling-like structures, where type 3 adds $sg$ to $V$ for dueling-Q and type 4 is dueling-Q. Type 5 uses a new head to estimate $Q_\theta$ separately, which can be considered as an auxiliary task to estimate $Q^\pi$. }
% \label{tab:ablation_parameters}
% \end{table}

\begin{figure*}[t!]
\centering
% \subfigure[FIGTOPCAP][breakout]{
% % \begin{minipage}{.33\linewidth}
% \includesvg[width=0.22\linewidth,inkscapelatex=false]{body/ablation/Breakout_nips1_ab.svg}
% }
% \subfigure[FIGTOPCAP][chopper\_command]{
% \includesvg[width=0.22\linewidth,inkscapelatex=false]{body/ablation/Chopper_Command_nips1_ab.svg}
% }
% \subfigure[FIGTOPCAP][krull]{
% \includesvg[width=0.22\linewidth,inkscapelatex=false]{body/ablation/Krull_nips1_ab.svg}
% }
% \subfigure{
%     \includegraphics[width=0.20\linewidth]{}
%     % \begin{minipage}[t]{7cm}
%     % \centering
%     % \raisebox{0.3\height}{\includegraphics[width=0.25\columnwidth]{nips1/ablation/label_nips1_ab.pdf}}
%     % \end{minipage}
% }
% \includegraphics[width=0.9\linewidth,bb= 0 0 1400 375]{}
\includegraphics[width=1.0\linewidth]{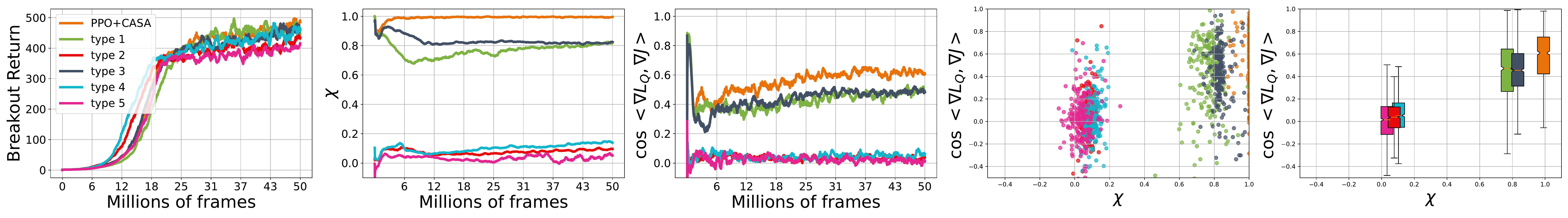}
\includegraphics[width=1.0\linewidth]{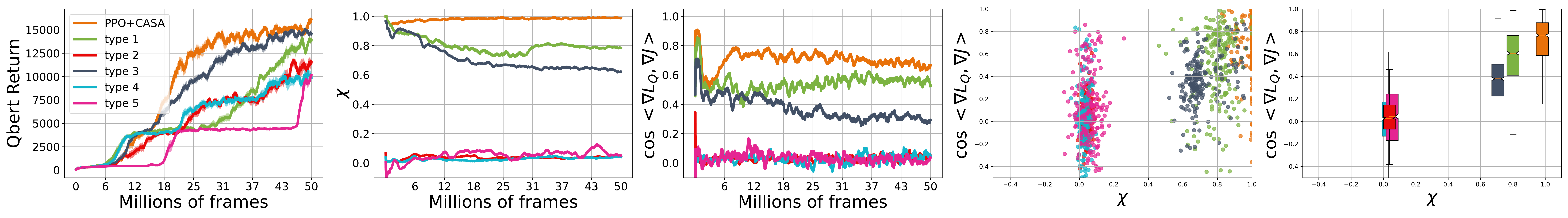}
\caption{The ablation results evaluated on Breakout (top row) and Qbert (bottom row). From left to right is the \emph{return} curve, $\chi$, $\cos(\beta)$, \emph{scatter plot} of $(\chi, \cos(\beta))$ and \emph{box plot} of $(\chi, \cos(\beta))$. Each scatter point is one batch sampled from every consecutive 100 batches. Each box is the interquartile range of scatter points.
}
\label{fig:ablation}
 % \vskip -0.1in
\end{figure*}

For the sake of clarity, we group PPO+CASA and type 3 as $sg\text{-}V$ group, type 2 and type 4 as $no\text{-}sg\text{-}V$ group. 
{\colorred The $sg\text{-}V$ group has higher $\chi$ and higher $\cos(\beta)$, which is closer to the compatible condition and the consistency between two GPI steps,} 
% The $no\text{-}sg\text{-}V$ group has higher {\colorred $\cos<\nabla L_Q, \nabla L_V>$}, which is simple to explain as the gradient of $Q$ also includes the gradient of $V$. 
% However, this seems to be hard for distinguishing state-action values, as
and $no\text{-}sg\text{-}V$ group is always worst than its contrast in $sg\text{-}V$ group. 

{\colorred PPO+CASA has $\chi=1$ and the highest $\cos(\beta)$. }
Type 1 has less returns than PPO+CASA. 
Hence, when applying a CASA-like structure, stopping the gradient of $\pi$ is always preferred. 

{\colorred Type 5 uses an individual head to estimate $Q^\pi$, which performs the worst. 
Hence, a well-designed CASA-like or dueling-like structure is always preferred. }

{\colorred By scatter plot and box plot in Figure \ref{fig:ablation}, $\chi$ and $\cos(\beta)$ are positive correlated depending on different structures.
This phenomenon answers part of the last question of Section \ref{sec:motivation}: for these specific designed structures, $\chi$ and $\cos(\beta)$ show positive correlation. }

\subsection{Evaluation of CASA on Atari Games}
\label{sec:atari_results}

We present an extensive evaluation on CASA, where we train CASA + DR-Trace on 57 Atari games and report the results in terms of two metrics. 
The first is \underline{H}uman \underline{N}ormalized \underline{S}core (HNS), which normalizes the reward by random policy and human expert policies. 
The other is \underline{S}tandardized \underline{A}tari \underline{BE}nchmark for \underline{R}L (SABER), which  normalizes the reward by random policy and human world records, where the normalized score is capped by 200$\%$. 
SABER is considered because recent studies show that the median HNS could easily get hacked by the algorithm since it is sensitive to improvement  on a small subset of games. 
% Compared to HNS, SABER adopts higher human scores and clips the scores. 
Table \ref{tab:atari_results} summarizes the results.
% \footnote{$HNS = \frac{G - G_{random}}{G_{Human} - G_{random}}$. $SABER = \min \{200\%, \frac{G - G_{random}}{G_{HumanWorldRecord} - G_{random}}\}$.}

\begin{table}[h!]
    \centering
    \vskip -0.05in
    \scalebox{1.0}{
    \begin{tabular}{c|cc|cc}
    \toprule

     & Mean HNS  & Median HNS  & Mean SABER  & Median SABER  \\ %& \\
    \midrule
    Rainbow & 873.97 & 230.99 & 28.39 & 4.92 \\
    IMPALA  & 957.34 & 191.82 & 29.45 & 4.31 \\ 
    LASER   & 1741.36 & \textbf{454.91} & \textbf{36.77} & 8.08 \\
   \textbf{CASA}    & \textbf{1941.08}   & 246.36 & 36.10 & \textbf{10.29} \\

    \bottomrule
    \end{tabular}
     }
    \caption{Evaluation scores for the methods on Atari benchmark presented in $\%$. 
    % Rainbow's scores are from \citep{rainbow}.
    % IMPALA's scores are from \citep{impala}.
    % LASER's scores are from \citep{laser}, no sweep at 200M.
    % As there are many versions of R2D2 and NGU, we use original papers'.
    % R2D2's scores are from \citep{r2d2}.
    % NGU's scores are from \citep{ngu}.
    % Agent57's scores are from \citep{agent57}.
    }
    \label{tab:atari_results}
        % \vskip -0.05in
\end{table}

Note that CASA is a variant of IMPALA with DR-Trace, and it achieves substantially better records than IMPALA across all the evaluation metrics. It also scores substantially better than all the methods in terms of mean HNS and median SABER scores. Though off-policy methods are known as  privileged for HNS evaluation due to replay, CASA outperforms strong off-policy baseline Rainbow. Though LASER outperforms CASA in Median HNS and Mean SABER, CASA outperforms it in median SABER and mean HNS. 
Overall, the aforementioned results demonstrate the conflict-averse strategy efficiently boosts the performance in large-scale training scenarios and outperform strong on/off-policy algorithms. Hyperparameters and individual games are presented in Appendix \ref{app:hyperparameters} and Appendix \ref{app:atari_results}, respectively.

\section{Related Works}
\label{sec:relatedworks}

Both value-based or policy-based approaches comply with the principle of GPI,
% Value-based methods update the value functions for policy evaluation followed by a simple practice of policy improvement in the greedy form, 
% whereas policy-based methods improve the policy progressively with intermediate value functions. 
% Thus, for both value-based and policy-based strands, 
but two GPI steps are coarsely related to each other such that jointly optimizing both functions might potentially bring conflicts. 
Despite of such crucial issue in GPI with function approximation, most decent model-free algorithms adopt a standard policy improvement/evaluation regime without considering conflict diminishing properties. 
The issue of reducing conflicts among multiple models trained simultaneously was considered in earlier machine learning literature,
% , though the attention of such issue drawn on reinforcement learning problems comes much later. 
such as for robust parameter estimation for multiple estimators under incomplete data~\citep{robins1995semiparametric,lunceford2004stratification,kang2007demystifying} and multitask learning with  gradient similarity measure~\citep{ChenNHLKCA20,YuK0LHF20,JavaloyV22}. 
% gradient changing method in multi-task(1.Adapting auxiliary losses using gradient similarity 2.Gradient Surgery for Multi-Task Learning 3.Multiple-gradient descent algorithm (mgda) for multiobjective optimization,etc. ) 

When the idea is introduced to reinforcement learning, earliest attempts  tackle  conservative and safe policy iteration problems~\citep{KakadeL02,HazanK11,PirottaRPC13}. Recently, more works have emerged to study GPI in a fine-grained manner. In \citep{op_reinforce}, 
a new Bellman operator is introduced which implements GPI with a policy improvement operator and a projection operator, where the projection attempts to find the best approximation of policy among  realizable policies. 
In \citep{RaileanuF21}, the policy and value updates are decoupled by approximating two networks with representation regularization. 
In \citep{CobbeHKS21}, GPI  is separated into a policy improvement and a feature distillation step. On contrast to the aforementioned works, we tackle the conflicts in GPI at the gradient-level, with theoretical analysis. 
Our work is related to \citep{pcl}, which utilizes both the unbiasedness and stability of on-policy training and the data efficiency of off-policy training to form a soft consistency error. Our work bridges the gap between the two GPI steps from an alternative angle of establishing a closer relationship between policy and value functions in their forms, without the focus on off-policy correction.     
% the policy evaluation and the policy improvement as a projection operator and an improvement operator.
% It points out that an improvement operator compatible with the projection operator may be preferred.
{\colorred Due to the difficulty of controlling the gap between GPI steps, we instead consider $\chi$. 
The condition $\chi = 1$ is close to compatible value function \citep{sutton1999policy, kakade2001natural}, shown in Section \ref{sec:motivation} and Appendix \ref{app:comp_v}.}

{\colorred 
\section{Limitation}
\label{sec:limit}
It's noticeable that CASA is only applied on discrete action space for now. 
We further find CASA applicable to any function approximation that is able to estimate advantage functions of all actions.
We provide additional discussion on continuous action space in Appendix \ref{app:cts_space}.

Since $\pi$ shares all parameters of value functions, it brings $\chi=1$ but sacrifices the \textit{freedom} of $\pi$ to be parameterized by other parameters. 
We conjecture that CASA is one endpoint of a trade-off curve between $\chi$ and the \textit{freedom} of $\pi$, where the other endpoint is that $\pi$ shares no parameter with value functions. 
}

\section{Ethics and  Reproducibility Statement }

This paper is aimed at academic issues in deep reinforcement learning, and the experiment used is also in the early stage, but it may provide opportunities for malicious applications of reinforcement learning in the future. We describe all details to reproduce the main
experimental results in Appendix \ref{app:hyperparameters}. 

\section{Conclusion} 
\label{sec:conclusion}
% 
% \changnan{todo, check this}\jjf{fix the grammar}
{\colorred This paper attempts to eliminate gradient inconsistency between policy improvement and policy evaluation.
The proposed innovative actor-critic design \textbf{C}ritic \textbf{AS} an \textbf{A}ctor (CASA) enhances consistency of two GPI steps by satisfying a weaker compatible condition.} %without introducing any bias of the original MDP.
We present both theoretical proof and empirical evaluation for CASA.
% Also, we propose an adaptive entropy control mechanism and prove the Lipschitz condition, instantiated as the Bandit Vote Algorithm in practice.
% The results show that our proposed method could achieve state-of-the-art performance standards with noticeable performance gain over several strong baselines when evaluated on ALE 200 million (200M) test suite.
% Our method is also very competitive compared with 10B+ scale algorithms.
The results show that our proposed method achieves state-of-the-art performance standards with noticeable performance gain over several strong baselines when evaluated on ALE 200 million (200M) benchmark.
We also present several ablation studies, which demonstrates the effectiveness of the proposed method's theoretical properties. 
{\colorred Future work includes studying the connection between the compatible condition and the gradient consistency between policy improvement and policy evalution.}

\bibliography{iclr2023_conference}
\bibliographystyle{iclr2023_conference}
\newpage
\appendix

% \setcounter{lemma}{0}
%     \renewcommand{\thelemma}{\Alph{section}\arabic{lemma}}
    
\begin{comment}
\end{comment}

{\colorred 
\section{Compatible Value Function}
\label{app:comp_v}

The original policy gradient with compatible value function is stated as follow. 
\begin{theorem}
[\cite{sutton1999policy}]
Let $Q_w$ be a state-action function with parameter $w$ and $\pi_\theta$ be a policy function with parameter $\theta$. 
If $Q_w$ satisfies $\mathbb{E}_{\pi} [(Q^\pi - Q_w) \nabla_w Q_w] = 0$ and 
$\nabla_w Q_w = \nabla_\theta \log \pi_\theta,$
then $$\nabla_\theta \mathcal{J} = \mathbb{E}_\pi [Q_w \nabla_\theta \log \pi_\theta].$$
\label{thm:pg_fa}
\end{theorem}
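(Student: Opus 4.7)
The plan is to reduce the claim to the canonical policy gradient identity via an additive decomposition of $Q^\pi$ around $Q_w$. First I would invoke the standard policy gradient theorem,
$$
\nabla_\theta \mathcal{J} = \mathbb{E}_\pi\!\left[Q^\pi(s,a)\,\nabla_\theta \log \pi_\theta(a|s)\right],
$$
where the expectation is with respect to the (discounted) on-policy state--action distribution induced by $\pi_\theta$. Next I would write $Q^\pi = Q_w + (Q^\pi - Q_w)$ and split the right-hand side as
$$
\nabla_\theta \mathcal{J} = \mathbb{E}_\pi\!\left[Q_w\,\nabla_\theta \log \pi_\theta\right] + \mathbb{E}_\pi\!\left[(Q^\pi - Q_w)\,\nabla_\theta \log \pi_\theta\right].
$$
The proof then reduces to showing that the second (error) term vanishes.

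For the error term I would apply the compatibility condition $\nabla_w Q_w = \nabla_\theta \log \pi_\theta$ to rewrite the gradient factor, turning the error term into $\mathbb{E}_\pi\!\left[(Q^\pi - Q_w)\,\nabla_w Q_w\right]$. This is exactly the quantity assumed to be zero by the first hypothesis (a normal-equation/orthogonality condition of the type that would hold at any minimizer of the on-policy mean-squared error $\mathbb{E}_\pi[(Q^\pi - Q_w)^2]$). Plugging this back into the decomposition gives the desired identity $\nabla_\theta \mathcal{J} = \mathbb{E}_\pi[Q_w \nabla_\theta \log \pi_\theta]$.

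The main subtlety here, rather than a genuine obstacle, is simply verifying that the expectation used in the standard policy gradient theorem matches the expectation appearing in both compatibility hypotheses; all three must be with respect to the same on-policy state--action distribution for the substitution to be valid. Given the paper's uniform convention of writing $\mathbb{E}_\pi$ for this distribution, no additional bookkeeping is required, and the result follows in essentially one line once the decomposition is in place.
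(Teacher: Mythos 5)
Your proof is correct and is exactly the canonical argument for the compatible function approximation theorem: decompose $Q^\pi = Q_w + (Q^\pi - Q_w)$ inside the policy gradient theorem, use $\nabla_w Q_w = \nabla_\theta \log \pi_\theta$ to turn the error term into the orthogonality condition $\mathbb{E}_\pi[(Q^\pi - Q_w)\nabla_w Q_w] = 0$, and conclude. The paper does not reprove this result (it is cited directly from \cite{sutton1999policy}), and your argument matches the original proof in that reference, so there is nothing further to reconcile.
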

If we let $w = \theta$ in Theorem \ref{thm:pg_fa}, where $Q_w$ and $\pi_\theta$ share parameters, we have the following theorem. 
\begin{theorem}
Let $Q_\theta$ be a state-action function with parameter $\theta$ and $\pi_\theta$ be a policy function with parameter $\theta$. 
If $Q_\theta$ satisfies $\mathbb{E}_{\pi} [(Q^\pi - Q_\theta) \nabla_\theta Q_\theta] = 0$ and 
$\nabla_\theta Q_\theta = \nabla_\theta \log \pi_\theta,$
then $$\nabla_\theta \mathcal{J} = \mathbb{E}_\pi [Q_\theta \nabla_\theta \log \pi_\theta].$$
\label{thm:pg_fa2}
\end{theorem}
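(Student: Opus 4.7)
The plan is to derive Theorem \ref{thm:pg_fa2} as a direct specialization of Theorem \ref{thm:pg_fa} to the case $w = \theta$. The essential observation is that the proof of the original compatible function theorem never uses the fact that $w$ and $\theta$ are disjoint; it only uses the two stated conditions applied to whatever gradient symbols are substituted in. So the entire argument will reduce to rewriting the standard policy gradient identity and applying the two hypotheses in sequence.

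First I would invoke the policy gradient theorem in its baseline-free form, namely $\nabla_\theta \mathcal{J} = \mathbb{E}_\pi\bigl[Q^\pi \nabla_\theta \log \pi_\theta\bigr]$. Then I would add and subtract $Q_\theta$ inside the expectation to obtain the decomposition
\begin{equation*}
\nabla_\theta \mathcal{J} = \mathbb{E}_\pi\bigl[Q_\theta \nabla_\theta \log \pi_\theta\bigr] + \mathbb{E}_\pi\bigl[(Q^\pi - Q_\theta)\nabla_\theta \log \pi_\theta\bigr].
\end{equation*}
The first term is exactly the right-hand side of the claim, so the whole task becomes showing that the residual expectation vanishes.

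For the residual, I would apply the compatibility hypothesis $\nabla_\theta Q_\theta = \nabla_\theta \log \pi_\theta$ to replace $\nabla_\theta \log \pi_\theta$ with $\nabla_\theta Q_\theta$, yielding $\mathbb{E}_\pi\bigl[(Q^\pi - Q_\theta)\nabla_\theta Q_\theta\bigr]$, which is zero by the first hypothesis. Combining gives the claimed identity.

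I do not expect any real obstacle: this is essentially a two-line corollary once the standard policy gradient theorem is quoted, and the shared-parameter setting causes no trouble because both hypotheses are already formulated in terms of the single parameter $\theta$. The only point worth flagging explicitly in the write-up is that interchanging expectation with the parameter-gradient in the initial step implicitly uses the usual regularity assumptions (differentiability of $\pi_\theta$ in $\theta$ and dominated convergence for the discounted-return sum), the same assumptions already needed to state the policy gradient theorem in the first place.
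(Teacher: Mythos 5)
Your proposal is correct and takes essentially the same route as the paper, which obtains Theorem~\ref{thm:pg_fa2} simply by setting $w=\theta$ in Theorem~\ref{thm:pg_fa} --- exactly the specialization you describe. Explicitly unfolding the add-and-subtract argument (decompose $\mathbb{E}_\pi[Q^\pi\nabla_\theta\log\pi_\theta]$, use compatibility to turn the residual into $\mathbb{E}_\pi[(Q^\pi-Q_\theta)\nabla_\theta Q_\theta]=0$) is a faithful rendering of the standard compatible-function-approximation proof and introduces no gap in the shared-parameter setting, since $\mathcal{J}$ depends on $\theta$ only through $\pi_\theta$.
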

Define 
$$\chi \overset{def}{=} \mathbb{E}_\pi [\cos <\nabla_\theta Q_\theta, \nabla_\theta \log \pi_\theta>].$$
We show that $\chi = 1$ is the necessary condition for the compatible condition $\nabla_\theta Q_\theta = \nabla_\theta \log \pi_\theta$. 
\begin{theorem}
i) If $\nabla_\theta Q_\theta \propto \nabla_\theta \log \pi_\theta$ for all states, then $\chi = 1$.

ii) If $\chi = 1$, then $\nabla_\theta Q_\theta \propto \nabla_\theta \log \pi_\theta$ for all states. 
\label{thm:connect_cond}
\end{theorem}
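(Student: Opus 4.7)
The plan is to handle the two implications separately, both resting on the elementary fact that $\cos\langle u, v\rangle \le 1$ with equality if and only if $u$ and $v$ are positive scalar multiples of one another. Part (i) reduces to a pointwise statement, while part (ii) is an equality case for an expectation of a bounded random variable.

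For direction (i), I would first pin down the convention that $\nabla_\theta Q_\theta \propto \nabla_\theta \log \pi_\theta$ means there exists a nonnegative scalar field $\lambda(s,a) \ge 0$ with $\nabla_\theta Q_\theta(s,a) = \lambda(s,a)\, \nabla_\theta \log \pi_\theta(a|s)$ at every $(s,a)$. The CASA construction in \eqref{eq:vannila_grad} gives $\lambda \equiv \tau > 0$, so this is the intended reading. Under this hypothesis, $\cos\langle \nabla_\theta Q_\theta, \nabla_\theta \log \pi_\theta\rangle = 1$ wherever both gradients are nonzero, hence $\chi = \mathbb{E}_\pi[1] = 1$. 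A brief remark disposes of the zero-vector edge case using the $\max(\cdot, 10^{-8})$ stabilization noted in Section \ref{sec:basic_setup}: it only alters the cosine on the null set of $(s,a)$ where one of the gradients vanishes, and that set is harmless whenever it has $\pi$-measure zero.

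For direction (ii), the key tool is the deterministic bound $\cos\langle u, v\rangle \le 1$. Letting $X(s,a) = \cos\langle \nabla_\theta Q_\theta(s,a), \nabla_\theta \log \pi_\theta(a|s)\rangle$, we have $X \le 1$ everywhere, so the hypothesis $\mathbb{E}_\pi[X] = 1$ forces $X = 1$ $\pi$-almost surely. The equality case of the Cauchy–Schwarz inequality then gives that for every such $(s,a)$, $\nabla_\theta Q_\theta(s,a)$ and $\nabla_\theta \log \pi_\theta(a|s)$ are nonnegative scalar multiples of one another, which is precisely the claimed proportionality.

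The main obstacle is interpretive rather than analytical: ``for all states'' must be understood relative to the measure under which $\chi$ is defined, so the conclusion of (ii) is strictly speaking ``for $\pi$-almost every $(s,a)$.'' I would handle this either by stating the theorem under a full-support assumption on $\pi$, which holds automatically for the softmax policy in \eqref{eq:casa} with finite logits and $\tau > 0$, or by explicitly restricting the conclusion to the support of $\pi$. A secondary technical point is the sign of the proportionality constant: the cosine attains $1$ only in the same-direction case, so the proportionality is necessarily with a nonnegative scalar, matching the $\tau > 0$ regime used throughout the paper.
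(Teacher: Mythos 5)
Your proof is correct and follows essentially the same route as the paper's: both directions rest on the bound $\cos\langle u,v\rangle \le 1$ together with its equality case, with (i) reducing to a pointwise computation and (ii) forcing the cosine to equal $1$ pointwise because a bounded-above random variable with expectation at the bound must attain it. Your treatment is somewhat more careful than the paper's on the sign of the proportionality constant, the zero-gradient edge case, and the ``for all states'' versus ``$\pi$-almost every state'' distinction, but these are refinements of the same argument rather than a different one.
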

By Theorem \ref{thm:connect_cond}, $\chi = 1$ is equivalent to $\nabla_\theta Q_\theta \propto \nabla_\theta \log \pi_\theta$, and $\nabla_\theta Q_\theta \propto \nabla_\theta \log \pi_\theta$ is the necessary condition for $\nabla_\theta Q_\theta = \nabla_\theta \log \pi_\theta$, hence $\chi = 1$ is the necessary condition for $\nabla_\theta Q_\theta = \nabla_\theta \log \pi_\theta$.
\begin{proof}
i) Since $\nabla_\theta Q_\theta \propto \nabla_\theta \log \pi_\theta$, we have $<\nabla_\theta Q_\theta, \nabla_\theta \log \pi_\theta> = 0$. 
By definition of $\chi$, we have 
$$\chi = \mathbb{E}_\pi [\cos <\nabla_\theta Q_\theta, \nabla_\theta \log \pi_\theta>] = \mathbb{E}_\pi [1] = 1.$$

ii) Since $\chi \leq 1$ and $\cos(x)$ is monotonic decreasing as $x$ goes from $0$ to $\pi$, the equality $\chi = 1$ only holds when all states satisfy $\cos <\nabla_\theta Q_\theta, \nabla_\theta \log \pi_\theta> = 0$, which means $\nabla_\theta Q_\theta \propto \nabla_\theta \log \pi_\theta$. 
\end{proof}
}

% $$
% \begin{aligned}
%     logp &= variable((3, 3, 4)) \\
%     q &= variable((3, 3, 4)) \\
%     alpha &= 0.3 \\
%     \pi &= softmax(alpha * logp + (1.0 - alpha) * q) \\
% \end{aligned}
% $$
\clearpage

\section{Gradients Between Policy Improvement and Policy Evaluation}
\label{app:mtv}

\begin{table}[hb!]
    \centering
    \scalebox{0.90}{
    \begin{math}
        \begin{array}{c|c|c|c}
    \toprule
     & \text{Function Approximation} & \text{Train Gradients} & \text{Cosine of Interested Angles} \\
    \midrule
    
    \text{PPO} & (V, logit) = (V_\theta, logit_\theta) & 0.5 \nabla L_V + \nabla \mathcal{J} & %\cos<\nabla L_V, \nabla \mathcal{J}> 
    \\ 
    & \pi = \text{softmax}(logit) & & \\
    
    \midrule
    
    \text{PPO ver.1} & (Q, logit) = (Q_\theta, logit_\theta), & 0.5 \nabla L_V + \nabla \mathcal{J} & \cos<\nabla L_Q, \nabla \mathcal{J}>%\cos<\nabla L_V, \nabla \mathcal{J}> 
    \\
    & \pi = \text{softmax}(logit) & & \cos<\nabla Q, \nabla \log \pi> \\
    & V = sg(\pi)\cdot Q & & %\cos<\nabla L_V, \nabla L_Q> 
    \\
    % & & &  \\
    
    \midrule
    
    \text{PPO ver.2} & (Q, logit) = (Q_\theta, logit_\theta), & 0.5 \nabla L_V + \nabla L_Q + \nabla \mathcal{J} & \cos<\nabla L_Q, \nabla \mathcal{J}> %\cos<\nabla L_V, \nabla \mathcal{J}> 
    \\
    & pi = \text{softmax}(logit) & & \cos<\nabla Q, \nabla \log \pi> \\
    & V = sg(\pi)\cdot Q & & %\cos<\nabla L_V, \nabla L_Q> 
    \\
    % & & &  \\
    
    \midrule
    
    \text{PPO+CASA} & (V, A) = (V_\theta, A_\theta), & 0.5 \nabla L_V + \nabla L_Q + \nabla \mathcal{J} & \cos<\nabla L_Q, \nabla \mathcal{J}> %\cos<\nabla L_V, \nabla \mathcal{J}> 
    \\
    & \pi = \text{softmax}(A/\tau), & & \cos<\nabla Q, \nabla \log \pi> \\
    & \Bar{A} = A - sg(\pi) \cdot A & & %\cos<\nabla L_V, \nabla L_Q> 
    \\
    & Q = \Bar{A} + sg(V) & &  \\
    
    \bottomrule 
    \end{array}
    \end{math}
    }
    
    \caption{PPO is the original PPO. PPO ver.1 and PPO ver.2 are adapted versions to calculate $\nabla L_Q$. PPO+CASA is applying CASA on PPO, which is described in Sec. \ref{sec:on_ppo_and_r2d2}.}
    \label{tab:ppo_mtv}
\end{table}

\begin{table}[ht!]
    \centering
    \scalebox{0.90}{
    \begin{math}
        \begin{array}{c|c|c|c}
    \toprule
     & \text{Function Approximation} & \text{Train Gradients} & \text{Cosine of Interested Angles} \\
    \midrule
    
    \text{R2D2} & (V, A) = (V_\theta, A_\theta) & \nabla L_Q & \cos<\nabla L_Q, \nabla \mathcal{J}>  %\cos<\nabla L_V, \nabla \mathcal{J}> 
    \\
    & Q = A + V & & \\
    & \pi = \text{softmax}(A / \tau) & & %\cos<\nabla L_V, \nabla L_Q> 
    \\
    % & & &  \\ % <\nabla Q, \nabla \log \pi>

    \midrule
    
    \text{R2D2 ver.1} & (V, A) = (V_\theta, A_\theta) & 0.5 \nabla L_V + \nabla L_Q & \cos<\nabla L_Q, \nabla \mathcal{J}>  % \cos<\nabla L_V, \nabla \mathcal{J}> 
    \\
    & Q = A + V & & % \cos<\nabla L_Q, \nabla \mathcal{J}> 
    \\
    & \pi = \text{softmax}(A / \tau) & & % \cos<\nabla L_V, \nabla L_Q> 
    \\
    % & & &  \\ % \cos<\nabla Q, \nabla \log \pi>

    \midrule
    
    \text{R2D2+CASA} & (V, A) = (V_\theta, A_\theta), & 0.5 \nabla L_V + \nabla L_Q + \nabla \mathcal{J} & \cos<\nabla L_Q, \nabla \mathcal{J}>  % \cos<\nabla L_V, \nabla \mathcal{J}> 
    \\
    & \pi = \text{softmax}(A/\tau),  & & %\cos<\nabla L_Q, \nabla \mathcal{J}> 
    \\
    & \Bar{A} = A - sg(\pi) \cdot A & & % \cos<\nabla L_V, \nabla L_Q> 
    \\
    & Q = \Bar{A} + sg(V) & &  \\ %\cos<\nabla Q, \nabla \log \pi>
    \bottomrule 
    \end{array}
    \end{math}
     }
    \caption{R2D2 is the original R2D2. R2D2 ver.1 is adapted version to include $\nabla L_V$ for training. R2D2+CASA is applying CASA on R2D2, which is described in Sec. \ref{sec:on_ppo_and_r2d2}.}
    \label{tab:r2d2_mtv}
\end{table}

To understand the behavior of 
{\colorred $$
    \beta \overset{def}{=} <\mathbb{E}_\pi[(Q^\pi-Q_\theta)\nabla_\theta Q_\theta],\, \mathbb{E}_\pi[(Q^\pi-V_\theta) \nabla_\theta \log \pi_\theta]>
$$
}
and 
{\colorred 
$$\chi \overset{def}{=} \mathbb{E}_\pi [\cos <\nabla_\theta Q_\theta, \nabla_\theta \log \pi_\theta>]$$
}
in reinforcement learning algorithms, we choose PPO as a representative for policy-based methods and R2D2 as a representative for value-based algorithms. 

Define $$L_V(\theta) = \mathbb{E}_\pi [ (V^{\pi} - V_\theta)^2 ],\  L_Q(\theta) = \mathbb{E}_\pi [ (Q^{\pi} - Q_\theta)^2 ],$$
and $$\nabla_\theta \mathcal{J}(\theta) = \mathbb{E}_\pi \left[ (Q^{\pi}  - V_\theta ) \nabla_\theta \log \pi \right].$$
We usually have above three kinds of loss functions in reinforcement learning, which aim to estimate the state values, state-action values and the policy. 
We do not talk about the estimations of $V^\pi$ and $Q^\pi$ as they are estimated as their usual way of PPO's and R2D2's. 
All hyperparameters are listed in Appendix \ref{app:hyperparameters}. 

{\colorred For brevity, we write 
$$\cos<\nabla Q, \nabla \log \pi> = \mathbb{E}_\pi [\cos <\nabla_\theta Q_\theta, \nabla_\theta \log \pi_\theta>],$$
and
$$
\begin{aligned}
    &\cos<\nabla L_Q, \nabla \mathcal{J}> = \cos<\mathbb{E}_\pi[(Q^\pi-Q_\theta)\nabla_\theta Q_\theta],\, \mathbb{E}_\pi[(Q^\pi-V_\theta) \nabla_\theta \log \pi_\theta]>, \\
    &\cos<\nabla L_V, \nabla \mathcal{J}> = \cos<\mathbb{E}_\pi[(V^\pi-V_\theta)\nabla_\theta V_\theta],\, \mathbb{E}_\pi[(Q^\pi-V_\theta) \nabla_\theta \log \pi_\theta]>, \\
    &\cos<\nabla L_V, \nabla L_Q> = \cos<\mathbb{E}_\pi[(V^\pi-V_\theta)\nabla_\theta V_\theta],\, \mathbb{E}_\pi[(Q^\pi-Q_\theta) \nabla_\theta Q_\theta]>. \\
\end{aligned}
$$}

The fact that PPO only has $\nabla_\theta L_V$ and $\nabla_\theta \mathcal{J}$ and R2D2 only has $\nabla_\theta L_Q$ is the main difficulty to track $\cos(\beta)$ and $\chi$. 
To solve the problem, we adjust PPO and R2D2 with different versions.

For PPO, we displace the estimation of $V_\theta$ by $sg(\pi)\cdot Q_\theta$, where $Q_\theta$ is estimated by function approximation and $V_\theta$ is estimated by taking the expectation of $Q_\theta$.
All versions of PPO are listed in Table \ref{tab:ppo_mtv}.

For R2D2, we point out that though we apply $\epsilon$-greedy to interact with environments, $\epsilon$ is only used for exploration and the final target policy of value-based methods is simply $\arg\max Q_\theta$. 
Because $\arg\max Q_\theta$ breaks the gradient, we use a surrogate policy to approximate the gradient of policy improvement. 
% \haosen{potential context, the necessity of measuring the policy gradient and ``entropy'' of the Q function is that R2D2's greedy policy changes rapidly, and the rapid change give R2D2 the implicit exploration ability. \citep{policychurn} }
Since R2D2 uses dueling structure and $\text{softmax}(A_\theta / \tau) = \text{softmax}(Q_\theta / \tau) \overset{\tau \rightarrow 0+}{\longrightarrow} \arg\max Q_\theta$, we use $\pi_{surrogate} = \text{softmax}(A_\theta / \tau)$ to calculate the policy gradient. 
We only use $\pi_{surrogate}$ on learner to calculate the gradient, where the policy that interacts with environments is still $\epsilon$-greedy. 
All versions of R2D2 are listed in Table \ref{tab:r2d2_mtv}.

% Results are shown in Figure \ref{fig:mtv_app}.

% \begin{figure}[hb!]
% % \centering
% \includegraphics[width=\linewidth]{}

% \includegraphics[width=\linewidth]{}

% \includegraphics[width=\linewidth]{}

% \includegraphics[width=\linewidth]{}
% \caption{Angles of Gradients and Returns of versions of PPO and R2D2 defined in Table \ref{tab:ppo_mtv} and Table \ref{tab:r2d2_mtv}.}
% \label{fig:mtv_app}
% \end{figure}

% \clearpage

% \changnan{casa summary deleted}
% \section{CASA summary}
% \begin{figure}[ht]
% \centering
% \includegraphics[width=0.4\textwidth,bb= 0 0 1800 1100]{}
% \caption{
% \textbf{Black} lines represent the forward process.
% \textbf{Dotted} black lines represent the \textit{stop gradient} operator in the forward process. 
% \textbf{Colorful} lines represent backpropagation from different loss functions. 
% Specifically, \textbf{blue} lines represent $\mathbb{E}_\pi [(Q^\pi-V)\nabla \log \pi]$, 
% \textbf{orange} lines represent $\mathbb{E}_\pi [(Q^\pi-Q)\nabla Q]$,
% and \textbf{green} lines represent $\mathbb{E}_\pi [(Q^\pi-V)\nabla V]$.}
% \label{fig:casa}
% \end{figure}
% \section{motivation experiments}

\clearpage

{\colorred \section{On Discussing Application of CASA on Continuous Action Space}
\label{app:cts_space}

As we can see CASA is only applied to discrete action space in the main context, we make a discussion on whether CASA is applicable on continuous action space. 
For brevity, we let $\tau=1$ and write \eqref{eq:casa} as:
\begin{equation}
\left\{
    \begin{aligned}
        &\pi = \text{softmax}(A), \\
        &\Bar{A} = A - \mathbb{E}_{\pi} [A], \\
        &Q = \Bar{A} + sg(V).
    \end{aligned}
\right. 
\end{equation}
The difficulty comes from estimating two quantities, one is $\text{softmax}(A)$, the other is $\mathbb{E}_{\pi} [A]$. 
This comes from the fact that discrete action space is countable so these two quantities are expressed in a closed-form, while continuous action space is uncountable so an accurate estimation of these two quantities is intractable. 
We can surely apply Monte Carlo methods to approximate, but a more elegant close-form expression may be preferred. 
Then this becomes another problem: \textit{how to estimate (state-action values / advantages / policy probabilities) of all actions in a continuous action space efficiently without loss of generality?}
This is another representational design problem, which is out of scope of this paper, so we don't touch much about it. 
But with the hope of inspiring a better solution to this problem, we provide one practical way of applying CASA on continuous action space based on kernel-based machine learning. 

Let $a_0, \dots, a_k$ to be basis actions in the action space. 
Let $A(s, a_0), \dots, A(s, a_k)$ to be advantage functions for tuples of states and basis actions. 
They can either share parameters or be isolated. 
Let $K(\cdot, \cdot)$ be a kernel function defined on the product of two action spaces. 
For any $a$ in the action space, we can estimate $A(s, a)$ by a decomposition such like $$A(s, a) = \frac{1}{Z_a} (K(a_0, a) A(s, a_0) + \dots + K(a_k, a) A(s, a_k)),$$ where $Z_a = \sum_{i=0}^k K(a_i, a)$ is a normalization constant. 

Since $K(\cdot, a)$ is a closed-form function of $a$, and $|\{A(s, a_0), \dots, A(s, a_k)\}|$ is finite, we can make a closed-form expression of both $\text{softmax}(A)$ and $\mathbb{E}_{\pi} [A]$. 
Then we can apply CASA directly on this expression, with one function estimates $V$ and the other function estimates advantages of all actions in a closed-form with only state as input.  
The policy is defined directly by $\text{softmax}$ of all advantages. 
In details, we define
\begin{equation}
\left\{
    \begin{aligned}
        &\pi(s, a) = \exp (A(s, a)) / \int_{a} \exp (A(s, a)) da, \\
        &\Bar{A}(s, a) = A(s, a) - \int_{a} sg(\pi(s, a)) A(s, a) da, \\
        &Q(s, a) = \Bar{A}(s, a) + sg(V(s)).
    \end{aligned}
\right. 
\end{equation}

Then it satisfies the consistency of CASA on continuous action space.
$$
\begin{aligned}
    \nabla \log \pi(s, a) &= \nabla A(s, a) - \frac{\nabla \int_{a} \exp (A(s, a)) da}{\int_{a} \exp (A(s, a)) da} \\
    &= \nabla A(s, a) - \frac{ \int_{a} \exp (A(s, a)) \nabla A(s, a) da}{\int_{a} \exp (A(s, a)) da} \\
    &= \nabla A(s, a) - \int_{a} \frac{  \exp (A(s, a)) }{\int_{a} \exp (A(s, a)) da} \nabla A(s, a) da \\
    &= \nabla A(s, a) - \int_{a} \pi(s, a) \nabla A(s, a) da \\
    &= \nabla \Bar{A}(s, a) = \nabla Q(s, a). 
\end{aligned}
$$
}

\section{DR-Trace}
\label{app:drtrace}

% One simple choice is to learn $V$ and $\pi$ by V-Trace \citep{impala} and to learn $Q$ by ReTrace \citep{retrace}. 
% \citep{impala} shows that $V^{\Tilde{\pi}}$ estimated by V-Trace converges to $V^*$ that corresponds to some $\Tilde{\pi}_{VTrace}$.
% Respectively, \citep{retrace} shows that $Q^{\Tilde{\pi}}$ estimated by ReTrace converges to $Q^*$ that corresponds to some $\Tilde{\pi}_{ReTrace}$.

As CASA estimates $(V, Q, \pi)$, we would ask
\textbf{i)} how to guarantee that $\Tilde{\pi}_{VTrace} = \Tilde{\pi}_{ReTrace}$, 
\textbf{ii)} how to exploit $(V, Q, \pi)$ to make a better estimation. 
Though we can apply V-Trace to estimate $V$ and ReTrace to estimate $Q$ with proper hyperparameters to guarantee $\Tilde{\pi}_{VTrace} = \Tilde{\pi}_{ReTrace}$, it's more reasonable to estimate $(V, Q)$ together. 
Inspired by Doubly Robust, which is shown to maximally reduce the variance, we introduce DR-Trace, which estimates $V$ by 
$$
\label{eq:dr-v}
    \begin{aligned}
        V_{DR}^{\Tilde{\pi}} (s_t) &\overset{def}{=} \mathbb{E}_{\mu} [ 
        V(s_t) + \sum_{k \geq 0} \gamma^k 
     c_{[t:t+k-1]} \rho_{t+k}  \delta^{DR}_{t+k} ],  
    \end{aligned}
$$
{\colorred where $\mu$ is the behavior policy}, $\delta^{DR}_t \overset{def}{=} r_t + \gamma V(s_{t+1}) - Q(s_t, a_t)$ is one-step Doubly Robust error, $\rho_t \overset{def}{=} \min\{\frac{\pi_t}{\mu_t}, \Bar{\rho} \}$ and $c_t \overset{def}{=} \min\{\frac{\pi_t}{\mu_t}, \Bar{c}\}$ are clipped per-step importance sampling, $c_{[t: t+k]} \overset{def}{=} \prod_{i=0}^{k} c_{t+i}$.

With one step Bellman equation, we estimate $Q$ by
$$
\label{eq:dr-q}
    \begin{aligned}
         Q_{DR}^{\Tilde{\pi}} (s_t, a_t) 
         &\overset{def}{=} \mathbb{E}_{s_{t+1}, r_t \sim p(\cdot, \cdot | s_t, a_t)} [  r_t + \gamma   V_{DR}^{\Tilde{\pi}} (s_{t+1}) ] 
        \\
        %  &=  \mathbb{E}_{\mu} [ r_t + \gamma V(s_{t + 1}) +
        % \gamma \sum_{k \geq 0} \gamma^k 
        % c_{[t+1:t+k]} \rho_{t+1+k}
        % \delta^{DR}_{t+1+k} V
        % ]
        % \\
        %  &= \mathbb{E}_{\mu}   [
        % Q(s_t, a_t) + \delta_t^{DR}V + \sum_{k \geq 1}  \gamma^k
        % c_{[t+1:t+k-1]} \rho_{t+k}
        % \delta^{DR}_{t+k} V
        % ],
        % \\
        &=  \mathbb{E}_{\mu}   [
        Q(s_t, a_t) + \sum_{k \geq 0}  \gamma^k
        c_{[t+1:t+k-1]} \Tilde{\rho}_{t, k}
        \delta^{DR}_{t+k}
        ], 
    \end{aligned}
$$
where $\Tilde{\rho}_{t, k} =  1_{\{k=0\}} + 1_{\{k > 0\}} \rho_{t+k}$.

% Compared to \eqref{eq:dr-v}, $c_{[t+1:t+k-1]} \Tilde{\rho}_{t, k}$ in \eqref{eq:dr-q} doesn't multiply importance sampling ratio of $a_t$, which meets the same intuition as \eqref{eq:vtrace} and \eqref{eq:retrace}.\\
\begin{theorem}
    Define $\Bar{A} = A - \mathbb{E}_\pi[A]$, $Q = \Bar{A} + sg(V)$,
    $$
    \begin{aligned}
    &\mathscr{T}(Q) \overset{def}{=} \mathbb{E}_{\mu}   [
        Q(s_t, a_t) + \sum_{k \geq 0}  \gamma^k
        c_{[t+1:t+k-1]} \Tilde{\rho}_{t, k}
        \delta^{DR}_{t+k}
        ], \\
    &\mathscr{S}(V) \overset{def}{=} \mathbb{E}_{\mu}   [
        V(s_t) + \sum_{k \geq 0}  \gamma^k
        c_{[t:t+k-1]} \rho_{t, k}
        \delta^{DR}_{t+k}
        ], \\
    &\mathscr{U}(Q, V) = (\mathscr{T}(Q) - \mathbb{E}_\pi[Q] + \mathscr{S}(V), \mathscr{S}(V)), \\
    &\mathscr{U}^{(n)}(Q, V) = \mathscr{U}(\mathscr{U}^{(n-1)}(Q, V)),
    \end{aligned}
    $$
    then $\mathscr{U}^{(n)}(Q, V) \rightarrow (Q^{\Tilde{\pi}}, V^{\Tilde{\pi}})$ that corresponds to 
    $$
        \Tilde{\pi}(a|s) = \frac
        {\min \left\{\Bar{\rho} \mu (a|s), \pi(a|s)\right\}}
        {\sum_{b \in \mathcal{A}}\min \left\{\Bar{\rho} \mu (b|s), \pi(b|s)\right\}}.
    $$ as $n \rightarrow +\infty$.
\label{thm:dr}
\end{theorem}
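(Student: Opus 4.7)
The plan is to adapt Espeholt et al.'s V-Trace convergence proof, exploiting the decomposition
$$
\delta^{DR}_t = \delta^V_t - \Bar{A}(s_t, a_t), \qquad \delta^V_t = r_t + \gamma V(s_{t+1}) - V(s_t),
$$
where $\Bar{A}(s,a) = Q(s,a) - V(s)$. Substituting into $\mathscr{S}$ yields
$$
\mathscr{S}(V) = \mathscr{V}(V) - \mathbb{E}_\mu\!\left[\textstyle\sum_{k \geq 0} \gamma^k c_{[t:t+k-1]} \rho_{t+k} \Bar{A}(s_{t+k}, a_{t+k})\right],
$$
where $\mathscr{V}$ is the standard V-Trace operator, and an analogous rewriting applies to $\mathscr{T}$. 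First I would identify $(Q^{\Tilde{\pi}}, V^{\Tilde{\pi}})$ as a fixed point of $\mathscr{U}$: at this point each $\delta^{DR}_{t+k}$ has conditional mean zero by the Bellman equation $Q^{\Tilde{\pi}}(s, a) = r(s, a) + \gamma \mathbb{E}_p[V^{\Tilde{\pi}}(s')]$, so the two telescoping sums vanish, giving $\mathscr{T}(Q^{\Tilde{\pi}}) = Q^{\Tilde{\pi}}$ and $\mathscr{S}(V^{\Tilde{\pi}}) = V^{\Tilde{\pi}}$; the correction $-\mathbb{E}_\pi[Q^{\Tilde{\pi}}] + \mathscr{S}(V^{\Tilde{\pi}}) = -V^{\Tilde{\pi}} + V^{\Tilde{\pi}} = 0$ by the CASA identity $\mathbb{E}_\pi[Q] = V$.

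Second, I would establish contraction of $\mathscr{U}$ in sup norm on the product space. Passing to deviations $(\tilde Q, \tilde V) = (Q - Q^{\Tilde{\pi}}, V - V^{\Tilde{\pi}})$, the operator becomes linear. Espeholt et al.'s V-Trace argument gives contraction of the $\mathscr{V}$ block with coefficient $\gamma' = 1 - (1-\gamma)\mathbb{E}_\mu[\rho_0] < 1$, while the cross-coupling $\mathbb{E}_\mu[\sum_k \gamma^k c_{[\cdot]}\rho \Bar{A}]$ is Lipschitz in $(\tilde Q, \tilde V)$ with constant bounded by $\Bar{\rho}/(1 - \gamma\Bar{c})$ using $\rho \leq \Bar{\rho}$ and $c \leq \Bar{c}$. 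Under the standard V-Trace hypotheses $\Bar{c} \leq \Bar{\rho}$ and $\gamma\Bar{c} < 1$, the spectral radius of the resulting $2 \times 2$ block operator on $(\tilde Q, \tilde V)$ is strictly below one, giving geometric convergence $\mathscr{U}^{(n)}(Q, V) \to (Q^{\Tilde{\pi}}, V^{\Tilde{\pi}})$ by Banach's fixed point theorem. The explicit form of $\Tilde{\pi}$ is inherited directly from V-Trace, since the clipped importance weight $\rho = \min(\pi/\mu, \Bar{\rho})$ is identical; the "rate is the same as V-Trace's" claim reduces to noting that my contraction coefficient matches $\gamma'$ up to the controlled $\Bar{A}$-coupling.

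The main obstacle will be controlling the coupling term $\mathbb{E}_\mu[\rho \Bar{A}]$ away from the fixed point. It does not vanish via $\mathbb{E}_\pi[\Bar{A}] = 0$, because the clipped $\rho$ induces expectation under $\Tilde{\pi}$ rather than $\pi$, so the CASA cancellation is only partial. I would handle this by bounding $\|\mathbb{E}_\mu[\rho(\Bar{A} - \Bar{A}^{\Tilde{\pi}})]\|_\infty \leq \Bar{\rho}(\|\tilde Q\|_\infty + \|\tilde V\|_\infty)$, absorbing it into the off-diagonal entries of the linearized operator, and checking that the joint spectral radius remains strictly below one — this bookkeeping step (rather than any new idea) is where the proof is most delicate, and it determines whether any extra hypothesis beyond the standard V-Trace ones is needed for the DR variant.
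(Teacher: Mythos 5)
Your fixed-point identification, the reduction of the limiting policy to the V-Trace $\Tilde{\pi}$, and the use of the CASA identity $\mathbb{E}_\pi[Q]=V$ to kill the correction term at the fixed point are all consistent with the paper. But the contraction step as you propose it has a genuine gap, and you have correctly located it yourself: the cross-coupling between the $Q$- and $V$-blocks cannot be ``absorbed into the off-diagonal entries'' of a linearized $2\times 2$ operator with a crude Lipschitz bound. The diagonal V-Trace block contracts only with margin $(1-\gamma)\mathbb{E}_\mu[\rho_0]$, which is of order $1-\gamma$, whereas any sup-norm bound on the coupling $\mathbb{E}_\mu[\sum_k \gamma^k c_{[\cdot]}\rho\,\Bar{A}]$ obtained by summing the series is of order $\Bar{\rho}/(1-\gamma)$ (your stated constant $\Bar{\rho}/(1-\gamma\Bar{c})$ is not even finite for the paper's hyperparameters, where $\gamma\Bar{c}>1$; one must use $\mathbb{E}_\mu[c]\le 1$ rather than $c\le\Bar{c}$ to make the series summable at all). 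No joint spectral-radius estimate of this shape closes for $\gamma$ near $1$, so the ``delicate bookkeeping'' you defer to is where the argument actually breaks.

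The missing idea is that the coupling cancels \emph{exactly}, not approximately, because the iteration is upper triangular. The paper sets $\widetilde{\mathscr{T}}(Q)=-\mathbb{E}_\pi[Q]+\mathscr{T}(Q)$ and proves by induction the exact factorization $\mathscr{U}^{(n)}(Q,V)=(\widetilde{\mathscr{T}}^{(n)}(Q)+\mathscr{S}^{(n)}(V),\,\mathscr{S}^{(n)}(V))$, after which it suffices that each block contract separately. For $\widetilde{\mathscr{T}}$: the structural constraint $\mathbb{E}_\pi[Q]=V$ (forced by $\Bar{A}=A-\mathbb{E}_\pi[A]$ and the stop gradient) lets one substitute $\mathbb{E}_{\mu}[V_{t+k+1}\mid\mathscr{F}_{t+k}]=\mathbb{E}_{\mu}[\mathbb{E}_\pi[Q_{t+k+1}\mid\mathscr{F}_{t+k+1}]\mid\mathscr{F}_{t+k}]$, so the difference $\widetilde{\mathscr{T}}(Q_1)-\widetilde{\mathscr{T}}(Q_2)$ involves only $Q_1-Q_2$ and contracts with coefficient $1-(1-\gamma)\mathbb{E}_\mu[\sum_{k}\gamma^{k}c_{[t+1:t+k-1]}\Tilde{\rho}_{t,k}]\le\gamma<1$. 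For $\mathscr{S}$: the $\Bar{A}$ contribution to $\delta^{DR}$ is identical for two different $V$'s, so it drops out of $\Delta^1_k-\Delta^2_k$ and the proof reduces verbatim to Espeholt et al.'s. Your observation that $\mathbb{E}_\mu[\rho\Bar{A}]\neq 0$ because clipping shifts the expectation toward $\Tilde{\pi}$ is true but beside the point: what must vanish is the coupling's \emph{variation} across arguments, not its value, and that cancellation is exact. The repair is therefore a structural decoupling enabled by the stop-gradient design, not a sharper perturbation estimate.
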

\begin{proof}
    See Appendix \ref{app:proof}, Theorem \ref{thm_app:dr}.
\end{proof}
Theorem \ref{thm:dr} shows that DR-Trace is a contraction mapping and $(V, Q)$ converges to $(V^{\Tilde{\pi}}, Q^{\Tilde{\pi}})$ that corresponds to 
$$
    \begin{aligned}
        \Tilde{\pi}(a|s) = \frac
        {\min \left\{\Bar{\rho} \mu (a|s), \pi(a|s)\right\}}
        {\sum_{b \in \mathcal{A}}\min \left\{\Bar{\rho} \mu (b|s), \pi(b|s)\right\}}.
    \end{aligned}
$$

\begin{figure}[h]
    \centering
\includegraphics[width=\linewidth]{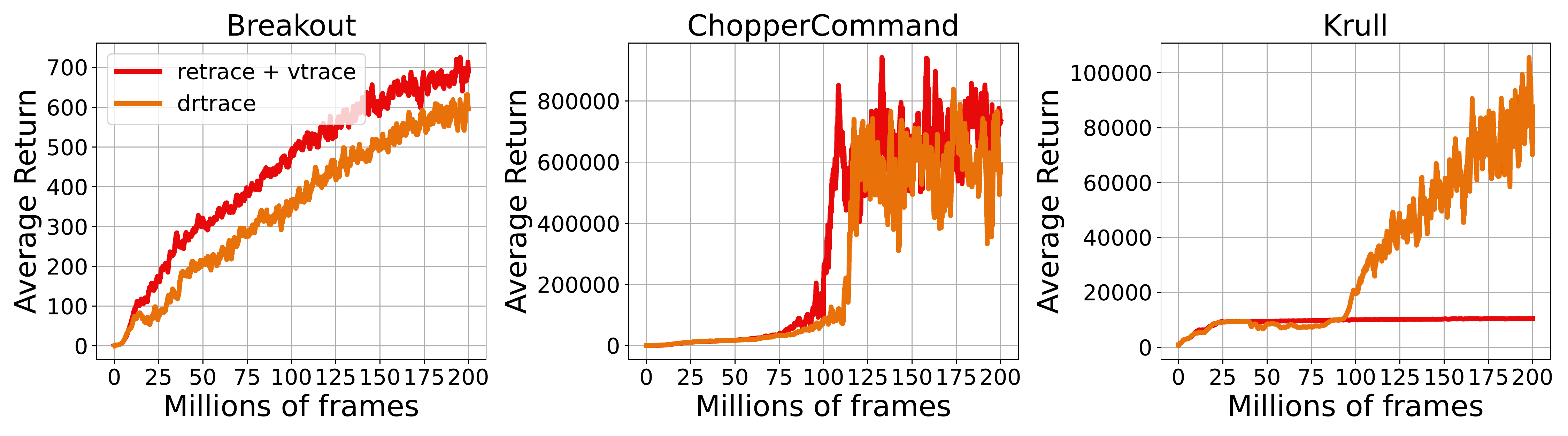}
    \caption{Ablation study for w/wo DR-Trace on Breakout, ChopperCommand and Krull.}
    \label{fig:app_dr_trace}
\end{figure}

According to our proof, DR-Trace should work similar to V-Trace and ReTrace, as the convergence rate and the limitation are same. 
We compare DR-Trace with V-Trace+ReTrace in Figure \ref{fig:app_dr_trace}, where we replace estimation of state values by V-Trace and estimation of state-action values by ReTrace. 
We call V-Trace+ReTrace as No-DR-Trace for brevity. 
No-DR-Trace performs better on Breakout and ChopperCommand, but fails to make a breakthrough on Krull. 
Recalling the fact that Doubly Robust can maximally reduce the variance of Bellman error, No-DR-Trace is less stable but also potential to achieve a better performance. 
A conclusion cannot be made about No-DR-Trace, as this phenomenon means that No-DR-Trace is less stable than DR-Trace, but it also holds the potential to achieve a better performance.

% \clearpage

\section{Proofs}
\label{app:proof}

\theoremstyle{plain}
\newtheorem{Lemma_app}{Lemma}[section]
\newtheorem{Theorem_app}{Theorem}[section]
\theoremstyle{definition}
\newtheorem*{Remark_app}{Remark}
\theoremstyle{remark}

% \begin{Lemma_app}
% Let 
% $g \in \textbf{C}^{1}(\mathbb{R}^{n}): \mathbb{R}^{n} \to \mathbb{R}^{n}, \ f \in \textbf{C}^{1}(\mathbb{R}^{n+k}): \mathbb{R}^{n+k} \to \mathbb{R}^{n}.
% $\\
% If
% $
% \nabla_x g(x) = \nabla_x f(x, y)$, for $\forall x\in \mathbb{R}^{n}, y\in \mathbb{R}^k,
% $
% then $\exists$ $c \in \textbf{C}^{1}(\mathbb{R}^{k}): \mathbb{R}^{k} \to \mathbb{R}^{n}$, s.t. $f(x, y) = g(x) + c(y)$.
% \label{lemma_app:func_sep}
% \end{Lemma_app}
% \begin{proof}

% Let $\Tilde{f}(x, y) = f(x, y) - g(x)$.

% Since $\nabla_x g(x) = \nabla_x f(x, y)$, we have 
% $$
% \nabla_x \Tilde{f} = 0, \ for \ \forall x\in \mathbb{R}^{n}, y\in \mathbb{R}^k.
% $$

% So $\Tilde{f}$ is a constant function w.r.t $x$, which can be denoted as $c(y) = \Tilde{f}(x, y)$.

% Hence, $f(x, y) = g(x) + c(y)$.
% \end{proof}

\begin{Lemma_app}
(i) Define $\pi = softmax(A / \tau)$, then $\nabla \log \pi = (\textbf{1} - \pi) \frac{\nabla A}{\tau}$. 
(ii) Denote $sg$ to be stop gradient and define $\Bar{A} = A - \mathbb{E}_\pi [A]$, $Q = \Bar{A} + sg(V)$, then $\nabla Q = (\textbf{1} - \pi) \nabla A$.
\label{lemma_app:vannila_grad}
\end{Lemma_app}
\begin{proof}

As $Q = \Bar{A} + sg(V) = A - sg(\pi)\cdot A + sg(V)$, it's obvious that $\nabla Q = (\textbf{1} - \pi) \nabla A$.

For $\log \pi$, it's a standard derivative of cross entropy, so we have $\nabla \log \pi = (\textbf{1} - \pi) \nabla (A / \tau) = (\textbf{1} - \pi) \frac{\nabla A}{\tau}$.
\end{proof}

\begin{Lemma_app}
Define $\Bar{A}= A - \mathbb{E}_\pi[A]$, $Q = \Bar{A} + sg(V), \pi = softmax(A / \tau)$, then 
$$
\mathbb{E}_\pi \left[ (Q - V) \nabla \log \pi \right]
= - \tau \nabla \textbf{H}[\pi].
$$
\label{lemma_app:eqiv_pg_ent}
\end{Lemma_app}
\begin{proof}
Since 
$$
\pi = \exp(A / \tau) / Z,\ Z = \int_\mathcal{A} \exp(A / \tau),
$$
we have 
$$
A = \tau \log \pi + \tau \log Z.
$$
Based on the observation that $\mathbb{E}_\pi \left[ f(s) \nabla \log \pi (\cdot | s) \right] = 0$, 
we have 
$$\mathbb{E}_\pi \left[ \mathbb{E}_\pi[A] \cdot \nabla \log \pi \right] = 0,$$ 
$$\mathbb{E}_\pi \left[ \log Z \cdot \nabla \log \pi \right] = 0.$$

On the one hand,
$$
\begin{aligned}
    \mathbb{E}_\pi \left[ (Q - V) \nabla \log \pi \right]
    &= \mathbb{E}_\pi \left[ A \nabla \log \pi \right] 
    - \mathbb{E}_\pi \left[ \mathbb{E}_\pi[A] \cdot \nabla \log \pi \right] \\
    &= \tau \mathbb{E}_\pi \left[ \log \pi \nabla \log \pi \right]
    + \tau \mathbb{E}_\pi \left[ \log Z \cdot \nabla \log \pi \right] \\
    &= \tau \mathbb{E}_\pi \left[ \log \pi \nabla \log \pi \right].
\end{aligned}
$$

On the other hand, 
$$
\begin{aligned}
    \nabla \textbf{H} [\pi] 
    &= - \nabla \int_\mathcal{A} \pi_i \log \pi_i \\
    &= - \int_\mathcal{A}  \nabla \pi_i \cdot \log \pi_i - \int_\mathcal{A} \pi_i \nabla \log \pi_i  \\
    &= - \int_\mathcal{A}  \pi_i \nabla \log \pi_i \cdot \log \pi_i - \int_\mathcal{A}  \pi_i \frac{\nabla \pi_i}{\pi_i} \\
    &= - \mathbb{E}_\pi \left[ \log \pi \nabla \log \pi \right].
\end{aligned}
$$
Hence, $
\mathbb{E}_\pi \left[ (Q - V) \nabla \log \pi \right]
= - \tau \nabla \textbf{H}[\pi]
$.
\end{proof}

\begin{Theorem_app}
    Define $\Bar{A} = A - \mathbb{E}_\pi[A]$, $Q = \Bar{A} + sg(V)$.
    Define $$
    \begin{aligned}
    &\mathscr{T}(Q) \overset{def}{=} \mathbb{E}_{\mu}   [
        Q(s_t, a_t) + \sum_{k \geq 0}  \gamma^k
        c_{[t+1:t+k-1]} \Tilde{\rho}_{t, k}
        \delta^{DR}_{t+k}
        ], \\
    &\mathscr{S}(V) \overset{def}{=} \mathbb{E}_{\mu}   [
        V(s_t) + \sum_{k \geq 0}  \gamma^k
        c_{[t:t+k-1]} \rho_{t, k}
        \delta^{DR}_{t+k}
        ], \\
    &\mathscr{U}(Q, V) = (\mathscr{T}(Q) - \mathbb{E}_\pi[Q] + \mathscr{S}(V), \mathscr{S}(V)), \\
    &\mathscr{U}^{(n)}(Q, V) = \mathscr{U}(\mathscr{U}^{(n-1)}(Q, V)),
    \end{aligned}
    $$
    then $\mathscr{U}^{(n)}(Q, V) \rightarrow (Q^{\Tilde{\pi}}, V^{\Tilde{\pi}})$ that corresponds to 
    $$
        \Tilde{\pi}(a|s) = \frac
        {\min \left\{\Bar{\rho} \mu (a|s), \pi(a|s)\right\}}
        {\sum_{b \in \mathcal{A}}\min \left\{\Bar{\rho} \mu (b|s), \pi(b|s)\right\}}.
    $$ as $n \rightarrow +\infty$.
\label{thm_app:dr}
\end{Theorem_app}
\begin{Remark_app}
$\mathscr{T}(Q) - \mathbb{E}_\pi[Q] + \mathscr{S}(V)$ is \textbf{exactly} how $Q$ is updated at training time. 
Since $Q = \Bar{A} + sg(V)$, if we apply gradient ascent on $Q$ and $V$ in directions $\nabla L_Q(\theta)$ and $\nabla L_V(\theta)$ respectively, change of $Q$ comes from two aspects. One comes from $\nabla L_Q(\theta)$, which changes $A$, the other comes from $\nabla L_V(\theta)$, which changes $V$. Because the gradient of $V$ is stopped when estimating $Q$, the latter is captured by "minus old baseline, add new baseline", which is $- \mathbb{E}_\pi[Q] + \mathscr{S}(V)$ in Theorem \ref{thm_app:dr}.
\end{Remark_app}
\begin{proof}
 Define
 $$
 \begin{aligned}
        \widetilde{\mathscr{T}}(Q) &= - \mathbb{E}_\pi[Q] + \mathscr{T}(Q), \\
        \widetilde{\mathscr{U}}(Q, V) &= (\widetilde{\mathscr{T}}(Q), \mathscr{S}(V)), \\
        \widetilde{\mathscr{U}}^{(n)}(Q, V) &=   \widetilde{\mathscr{U}}(\widetilde{\mathscr{U}}^{(n-1)}(Q, V)).
 \end{aligned}
 $$
By Lemma \ref{lemma_app:dr_q}, $\widetilde{\mathscr{T}}^{(n)}(Q)$ converges to some $A^*$ as $n \rightarrow \infty$. This process will not influence the estimation of $V$ as the gradient of $V$ is stopped when estimating $Q$. According to the proof, $A^*$ does not depend on $V$. \\
By Lemma \ref{lemma_app:dr_v}, $\mathscr{S}^{(n)}(V)$ converges to some $V^*$ as $n \rightarrow \infty$. \\
Hence, we have
$$
\widetilde{\mathscr{U}}^{(n)}(Q, V) \rightarrow (A^*, V^*)\ \ as\ \ n \rightarrow +\infty. 
$$
By definition, 
$$
\mathscr{U}(Q, V) = (\widetilde{\mathscr{T}}(Q) + \mathscr{S}(V), \mathscr{S}(V)),
$$
we can regard $\widetilde{\mathscr{T}}(Q) + \mathscr{S}(V)$ as $Q$ and regard $\mathscr{S}(V)$ as $V$, then
$$
\begin{aligned}
    \mathscr{U}^{(2)}(Q, V) 
    &= \mathscr{U}(\widetilde{\mathscr{T}}(Q) + \mathscr{S}(V), \mathscr{S}(V)) \\
    &= (\mathscr{T}(\widetilde{\mathscr{T}}(Q) + \mathscr{S}(V)) -\mathscr{S}(V) + \mathscr{S}^{(2)}(V), \mathscr{S}^{(2)}(V)) \\
    &= (\widetilde{\mathscr{T}}^{(2)}(Q) + \mathscr{S}^{(2)}(V), \mathscr{S}^{(2)}(V)).
\end{aligned}
$$
By induction, 
$$
\begin{aligned}
    \mathscr{U}^{(n)}(Q, V) &= (\widetilde{\mathscr{T}}^{(n)}(Q) + \mathscr{S}^{(n)}(V), \mathscr{S}^{(n)}(V)) \\
    &\rightarrow (A^*+V^*, V^*)\ \ as\ \ n\rightarrow + \infty.
\end{aligned}
$$
Same as \citep{impala}, 
$$
    \Tilde{\pi}(a|s) = \frac
    {\min \left\{\Bar{\rho} \mu (a|s), \pi(a|s)\right\}}
    {\sum_{b \in \mathcal{A}}\min \left\{\Bar{\rho} \mu (b|s), \pi(b|s)\right\}}.
$$ 
is the policy s.t. the Bellman equation holds, which is 
$$\mathbb{E}_\mu[\rho_t (r_t + \gamma V_{t+1} - V_t) | \mathscr{F}_t] = 0,$$ and $\mathscr{U}(Q^{\Tilde{\pi}}, V^{\Tilde{\pi}}) = (Q^{\Tilde{\pi}}, V^{\Tilde{\pi}})$. \\
So we have
$(A^*+V^*, V^*) = (Q^{\Tilde{\pi}}, V^{\Tilde{\pi}}).$
\end{proof}

\begin{Lemma_app}
Define $\Bar{A}= A - \mathbb{E}_\pi[A]$, $Q = \Bar{A} + sg(V)$,
then operator 
$$
    \mathscr{T}(Q) \overset{def}{=} \mathbb{E}_{\mu}   [
        Q(s_t, a_t) + \sum_{k \geq 0}  \gamma^k
        c_{[t+1:t+k-1]} \Tilde{\rho}_{t, k}
        \delta^{DR}_{t+k}
        ]
$$
is a contraction mapping w.r.t. $Q$.
\label{lemma_app:dr_q}
\end{Lemma_app}
\begin{Remark_app}
Note that $\mathscr{T}(Q)$ is exactly \eqref{eq:dr-q}. 

Since $Q = A + sg(V)$, the gradient of $V$ is stopped when estimating $Q$, updating $Q$ will not change $V$, which is equivalent to updating $A$.
Without loss of generality, we assume $V$ is fixed as $V^*$ in the proof.
\end{Remark_app}
\begin{proof}

$\Bar{A} = A - \mathbb{E}_\pi[A]$ shows $\mathbb{E}_\pi[\Bar{A}] = 0$, which guarantees that no matter how we update $A$, we always have $\mathbb{E}_\pi[Q] = V^*$.

Based on above observations, define 
$$
    \widetilde{\mathscr{T}}(Q) \overset{def}{=} - \mathbb{E}_\pi [Q] + \mathscr{T}(Q).
$$

It's obvious that we only need to prove $\widetilde{\mathscr{T}}(Q)$ is a contraction mapping.

For brevity, we denote $$Q_t = Q(s_t, a_t), A_t = A(s_t, a_t), V^*_t = V^*(s_t).$$

Noticing that $\Tilde{\rho}_{t, 0} = 1$, let $\mathscr{F}$ represent filtration, we can rewrite $\widetilde{\mathscr{T}}$ as 
\begin{equation}
\label{eq:dr_a_2}
\begin{aligned}
    \widetilde{\mathscr{T}}(Q)
    &= \mathbb{E}_{\mu}   [
        A_t + \sum_{k \geq 0}  \gamma^k
        c_{[t+1:t+k-1]} \Tilde{\rho}_{t, k}
        \delta^{DR}_{t+k}
        ] \\
    &= \mathbb{E}_{\mu}   [
        -V^*_t + \sum_{k \geq 0}  \gamma^k
        c_{[t+1:t+k-1]} \Tilde{\rho}_{t, k}
        r_{t+k}
        + 
        \sum_{k \geq 0}  \gamma^{k+1}
        c_{[t+1:t+k-1]} \Delta_k ],
        \\
\end{aligned}
\end{equation}
where 
\begin{equation}
\label{eq:dr_delta}
    \Delta_k = \mathbb{E}_{\mu}\left[\Tilde{\rho}_{t, k} V^*_{t+k+1} - c_{t+k} \Tilde{\rho}_{t, k+1} Q_{t+k+1} | \mathscr{F}_{t+k}\right].
\end{equation}
By definition of $Q$,
$$
    \mathbb{E}_{\mu}[V_{t+k+1}^*|\mathscr{F}_{t+k}] 
    = \mathbb{E}_{\mu}[
    \mathbb{E}_\pi[Q_{t+k+1}|\mathscr{F}_{t+k+1}]
    |\mathscr{F}_{t+k}], \\
    % \geq&  \mathbb{E}_{\mu}[
    % \mathbb{E}_\mu[\Tilde{\rho}_{t, k+1} Q_{t+k+1}|\mathscr{F}_{t+k+1}]
    % |\mathscr{F}_{t+k}], 
$$
we can rewrite \eqref{eq:dr_delta} as
\begin{equation}
\label{eq:dr_q_delta}
\Delta_k = \mathbb{E}_{\mu}[
(
\Tilde{\rho}_{t, k} \frac{\pi_{t+k+1}}{\mu_{t+k+1}}- c_{t+k} \Tilde{\rho}_{t, k+1} 
) Q_{t+k+1} | \mathscr{F}_{t+k}
].
\end{equation}
For any $Q_1 = A_1 + sg(V^*)$, $Q_2 = A_2 + sg(V^*)$, since
$$
\mathbb{E}_{\mu}[
(
\Tilde{\rho}_{t, k} \frac{\pi_{t+k+1}}{\mu_{t+k+1}}- c_{t+k} \Tilde{\rho}_{t, k+1} 
) | \mathscr{F}_{t+k}
] \geq 0,
$$
by \eqref{eq:dr_a_2} \eqref{eq:dr_q_delta}, we have 
% $$
% || \Delta^1_k - \Delta_k^2 || \leq \mathbb{E}_{\mu}\left[
% \left(
% \Tilde{\rho}_{t, k} \frac{\pi_{t+k+1}}{\mu_{t+k+1}}- c_{t+k} \Tilde{\rho}_{t, k+1} 
% \right) | \mathscr{F}_{t+k}
% \right] ||A^1 - A_2||.
% $$
$$
        || \widetilde{\mathscr{T}}(Q_1) - \widetilde{\mathscr{T}}(Q_2) || 
        % \leq& \mathbb{E}_{\mu} \left[ \sum_{k \geq 0}  \gamma^{k+1} c_{[t+1:t+k-1]} || \Delta_k^1 - \Delta_k^2 || \right] \\
        \leq \mathcal{C} || Q_1 - Q_2 ||,
$$
where 
$$
    \begin{aligned}
        \mathcal{C} 
        &= \mathbb{E}_{\mu} [ \sum_{k \geq 0}  \gamma^{k+1} c_{[t+1:t+k-1]} 
        (
        \Tilde{\rho}_{t, k} \frac{\pi_{t+k+1}}{\mu_{t+k+1}}- c_{t+k} \Tilde{\rho}_{t, k+1} 
        ) ]
        \\
        &= \mathbb{E}_{\mu} [1 -1 + \sum_{k \geq 0}  \gamma^{k+1} c_{[t+1:t+k-1]} 
        \left(
        \Tilde{\rho}_{t, k} - c_{t+k} \Tilde{\rho}_{t, k+1} 
        \right) ] 
        \\
        &= 1 - (1 - \gamma)  \mathbb{E}_{\mu} [\sum_{k \geq 0} \gamma^{k}c_{[t+1:t+k-1]} \Tilde{\rho}_{t, k}  ] \\
        &\leq 1 - (1 - \gamma) < 1.
    \end{aligned}
$$
Hence, $\widetilde{\mathscr{T}}(Q)$ is a contraction mapping and converges to some fixed function, which we denote as $A^*$. So $\mathscr{T}(Q)$ is also a contraction mapping and converges to $A^*+V^*$.
\end{proof}

\begin{Lemma_app}
Define $Q = A + sg(V)$ with $\mathbb{E}_\pi [A] = 0$,
then operator 
$$
    \mathscr{S}(V) \overset{def}{=} \mathbb{E}_{\mu}  [
        V(s_t) + \sum_{k \geq 0}  \gamma^k
        c_{[t:t+k-1]} \rho_{t, k}
        \delta^{DR}_{t+k}
        ]
$$
is a contraction mapping w.r.t. $V$.
\label{lemma_app:dr_v}
\end{Lemma_app}
\begin{Remark_app}
Note that $\mathscr{S}(V)$ is exactly \eqref{eq:dr-v}. 
\end{Remark_app}
\begin{proof}

% Since $Q = A + sg(V)$, updating $V$ wouldn't influence $A$. WLOG, we assume $A$ is fixed as $A^*$ in Lemma \ref{lemma_app:dr_v}. \\
Same as Lemma \ref{lemma_app:dr_q}, we can get
$$
    \Delta_k = \mathbb{E}_{\mu}\left[
    \left( \rho_{t+k} - c_{t+k} \rho_{t+k+1}\right) V_{t+k+1} 
     -  c_{t+k} \rho_{t+k+1} A^*_{t+k+1} | \mathscr{F}_{t+k}\right],
$$
so we have 
$$
    \Delta^1_k - \Delta^2_k = \mathbb{E}_{\mu}\left[ 
    \left( \rho_{t+k} - c_{t+k} \rho_{t+k+1}\right) \cdot  
   (V^1_{t+k+1} -  V^2_{t+k+1})
     | \mathscr{F}_{t+k}\right].
$$
The remaining proof is identical to \citep{impala}'s.
\end{proof}

\clearpage

\section{Hyperparameters}
\label{app:hyperparameters}

Our python packages are shown in Table \ref{tab:package}.

\begin{table}[h!]
\begin{center}
\begin{tabular}{l@{\hspace{.43cm}}l@{\hspace{.22cm}}}
\toprule
\textbf{Package} & \textbf{Version}  \\
\midrule
ale-py & 0.6.0.dev20200207 \\
gym & 0.19.0 \\
tensorflow & 1.15.2 \\
opencv-python & 4.1.2.30 \\
opencv-contrib-python & 4.4.0.46 \\
\bottomrule
\end{tabular}
\caption{Versions for python packages among all experiments.}
\label{tab:package}
\end{center}
\end{table}

All experiments follow the shared hyperparameters as in Table \ref{tab:shared_hyperparameters}. 
The specific hyperparameters for PPO, R2D2 and CASA+DR-Trace are shown in Table \ref{tab:ppo_hyperparameters}, Table \ref{tab:r2d2_hyperparameters} and Table \ref{tab:drtrace_hyperparameters}.
The only exceptions are $V$-loss scaling, $Q$-loss scaling and $\pi$-loss scaling, which may be zero depending on some specific ablation settings. 
We will state these three hyperparameters every time in all experiments.

% \begin{multicols}{2}
\begin{table}[H]
\begin{center}
\scalebox{0.95}{
\begin{tabular}{l@{\hspace{.43cm}}l@{\hspace{.22cm}}}
\toprule
\textbf{Parameter} & \textbf{Value}  \\
\midrule
Atari Version & NoFrameskip-v4 \\
Atari Wrapper & gym.wrappers.atari\_preprocessing \\
Image Size & (84, 84) \\
Grayscale & Yes \\
Num. Action Repeats & 4 \\
Num. Frame Stacks & 4 \\
Action Space & Full \\
End of Episode When Life Lost & No \\
% Num. States & 200M \\
Num. Environments & 160 \\
% Reward Clip & Yes \\
% Intrinsic Reward & No \\
Random No-ops & 30 \\
% Burn-in & 40 \\
% Seq-length & 80 \\
Burn-in Stored Recurrent State & Yes \\
Bootstrap & Yes \\
Optimizer & Adam Weight Decay \\
Weight Decay Rate & 0.01 \\
Weight Decay Schedule & Anneal linearly to 0 \\
Learning Rate & 5e-4 \\
Warmup Steps & 4000 \\
Learning Rate Schedule & Anneal linearly to 0 \\
AdamW $\beta_1$ & 0.9 \\
AdamW $\beta_2$ & 0.98 \\
AdamW $\epsilon$ & 1e-6 \\
AdamW Clip Norm & 50.0 \\
% Auxiliary Forward Dynamic Task & Yes \\
% Auxiliary Inverse Dynamic Task & Yes \\
Learner Push Model Every $n$ Steps & 25 \\
Actor Pull Model Every $n$ Steps & 64 \\
% Num. Bandits & 7 \\
% Bandit Learning Rate & Uniform([0.05, 0.1, 0.2]) \\
% Bandit Tiling Width & Uniform([1, 2, 3]) \\
% Num. Bandit Candidates & 7 \\
% Bandit Value Normalization & Yes \\
% Bandit UCB Scaling & 1.0 \\
% Bandit Search Range for $1 / \tau$ & [0.0, 50.0] \\
\bottomrule
\end{tabular}}
\caption{Configurations for shared hyperparameters among all experiments.}
\label{tab:shared_hyperparameters}
\end{center}
\end{table}
% \end{multicols}

% \section{Preprocess setting}
% \label{app:preprocess}
% \haosen{should we add some gym version and other details for reproducibility?}

\clearpage

% \begin{table}[H]
% \begin{center}
% \caption{Shared Hyperparameters for All Experiments.}
% \label{tab:fixed_model_hyper-parameters_atari}
% \resizebox{\textwidth}{!}{% <------ Don't forget this %
%  \begin{tabular}{l l l l }
% \toprule
% \textbf{Parameter} & \textbf{Value} & \textbf{Parameter} & \textbf{Value}  \\
% \midrule
% Image Size & (84, 84) & Grayscale & Yes \\
% Num. Action Repeats & 4 &  Num. Frame Stacks & 4 \\
% Action Space & Full & End of Episode When Life Lost & No \\
% Num. States & 200M & Num. Environments & 160 \\
% Random No-ops & 30 & Burn-in & 40 \\
% Seq-length & 80 & Burn-in Stored Recurrent State & Yes \\
% Bootstrap & Yes & Batch size & 64 \\
% % Entropy Regularization & No \\
% Backbone & IMPALA,deep & LSTM Units & 256 \\
% Optimizer & Adam Weight Decay & Weight Decay Rate & 0.01 \\
% Weight Decay Schedule & Anneal linearly to 0 & Learning Rate & 5e-4 \\
% Warmup Steps & 4000 & Learning Rate Schedule & Anneal linearly to 0 \\
% AdamW $\beta_1$ & 0.9 & AdamW $\beta_2$ & 0.98 \\
% AdamW $\epsilon$ & 1e-6 &  AdamW Clip Norm & 50.0 \\
% % Auxiliary Forward Dynamic Task & Yes \\
% % Auxiliary Inverse Dynamic Task & Yes \\
% Learner Push Model Every $n$ Steps & 25 & Actor Pull Model Every $n$ Steps & 64 \\
% % Num. Bandits & 7 \\
% % Bandit Learning Rate & Uniform([0.05, 0.1, 0.2]) \\
% % Bandit Tiling Width & Uniform([1, 2, 3]) \\
% % Num. Bandit Candidates & 7 \\
% % Bandit Value Normalization & Yes \\
% % Bandit UCB Scaling & 1.0 \\
% % Bandit Search Range for $1 / \tau$ & [0.0, 50.0] \\
% \bottomrule
% \end{tabular} 
% }
% \end{center}
% \end{table}

% \begin{multicols}{2}
\begin{table}[H]
\begin{center}
\scalebox{0.85}{
\begin{tabular}{l@{\hspace{.43cm}}l@{\hspace{.22cm}}}
\toprule
\textbf{Parameter} & \textbf{Value}  \\
\midrule
% Image Size & (84, 84) \\
% Grayscale & Yes \\
% Num. Action Repeats & 4 \\
% Num. Frame Stacks & 4 \\
% Action Space & Full \\
% End of Episode When Life Lost & No \\
{\colorred Num. States} & {\colorred 50M} \\
Sample Reuse & 1 \\
% Num. Environments & 160 \\
Reward Shape & clip$(r, 0, 1)$ \\
% Reward Clip & Yes \\
% Intrinsic Reward & No \\
% Random No-ops & 30 \\
{\colorred Burn-in} & {\colorred 0} \\
{\colorred Seq-length} & {\colorred 40} \\
% Burn-in Stored Recurrent State & Yes \\
% Bootstrap & Yes \\
% Batch size & 64 \\
Discount ($\gamma$) & 0.995 \\
{\colorred Batch size} & {\colorred 8} \\
{\colorred Backbone} & {\colorred IMPALA,shallow without LSTM} \\
% $V$-loss Scaling ($\alpha_1$) & 0.5 \\
% $Q$-loss Scaling ($\alpha_2$) & 1.0 \\
% $\pi$-loss Scaling ($\alpha_3$) & 1.0 \\
PPO clip $\epsilon$ & 0.2 \\
GAE $\lambda$ & 0.8 \\
Temperature ($\tau$) & 0.1 \\
% Entropy Regularization & No \\
% Backbone & IMPALA,deep \\
% LSTM Units & 256 \\
% Optimizer & Adam Weight Decay \\
% Weight Decay Rate & 0.01 \\
% Weight Decay Schedule & Anneal linearly to 0 \\
% Learning Rate & 5e-4 \\
% Warmup Steps & 4000 \\
% Learning Rate Schedule & Anneal linearly to 0 \\
% AdamW $\beta_1$ & 0.9 \\
% AdamW $\beta_2$ & 0.98 \\
% AdamW $\epsilon$ & 1e-6 \\
% AdamW Clip Norm & 50.0 \\
% % Auxiliary Forward Dynamic Task & Yes \\
% % Auxiliary Inverse Dynamic Task & Yes \\
% Learner Push Model Every $n$ Steps & 25 \\
% Actor Pull Model Every $n$ Steps & 64 \\
% Num. Bandits & 7 \\
% Bandit Learning Rate & Uniform([0.05, 0.1, 0.2]) \\
% Bandit Tiling Width & Uniform([1, 2, 3]) \\
% Num. Bandit Candidates & 7 \\
% Bandit Value Normalization & Yes \\
% Bandit UCB Scaling & 1.0 \\
% Bandit Search Range for $1 / \tau$ & [0.0, 50.0] \\
\bottomrule
\end{tabular}}
\caption{Hyperparameter configurations for PPO.}
\label{tab:ppo_hyperparameters}
\end{center}
\end{table}
% \end{multicols}
% \clearpage

% \begin{multicols}{2}
\begin{table}[H]
\begin{center}
\scalebox{0.85}{
\begin{tabular}{l@{\hspace{.43cm}}l@{\hspace{.22cm}}}
\toprule
\textbf{Parameter} & \textbf{Value}  \\
\midrule
% Image Size & (84, 84) \\
% Grayscale & Yes \\
% Num. Action Repeats & 4 \\
% Num. Frame Stacks & 4 \\
% Action Space & Full \\
% End of Episode When Life Lost & No \\
{\colorred Num. States} & {\colorred 50M} \\
Sample Reuse & 2 \\
% Num. Environments & 160 \\
Target Shape & $Q_{t}^{\Tilde{\pi}} = h(\sum_{i=0}^{n-1} \gamma^i r_{t+i} + \gamma^n h^{-1}(\text{Double}(Q_{t+n})))$ \\
Target Shape Function $h$ & $h(x) = \text{sign}(x) \cdot (\sqrt{|x| + 1} - 1) + 10^{-3} x$ \\
Bootstrap Length $n$ & 5 \\
$\epsilon$-greedy & $\epsilon \sim 0.4^{\text{uniform}(1, 8)}$ \\
PER Sample Temperature $\alpha$ & 0.9 \\
PER Buffer Size & 400000 \\
% Reward Clip & No \\
% Intrinsic Reward & No \\
% Random No-ops & 30 \\
{\colorred Burn-in} & {\colorred 0} \\
{\colorred Seq-length} & {\colorred 40} \\
% Burn-in Stored Recurrent State & Yes \\
% Bootstrap & Yes \\
% Batch size & 64 \\
Discount ($\gamma$) & 0.997 \\
{\colorred Batch size} & {\colorred 8} \\
{\colorred Backbone} & {\colorred IMPALA,shallow without LSTM} \\
% $V$-loss Scaling ($\alpha_1$) & 0.5 \\
% $Q$-loss Scaling ($\alpha_2$) & 1.0 \\
% $\pi$-loss Scaling ($\alpha_3$) & 1.0 \\
Temperature ($\tau$) & 0.1 \\
% Entropy Regularization & No \\
% Backbone & IMPALA,deep \\
% LSTM Units & 256 \\
% Optimizer & Adam Weight Decay \\
% Weight Decay Rate & 0.01 \\
% Weight Decay Schedule & Anneal linearly to 0 \\
% Learning Rate & 5e-4 \\
% Warmup Steps & 4000 \\
% Learning Rate Schedule & Anneal linearly to 0 \\
% AdamW $\beta_1$ & 0.9 \\
% AdamW $\beta_2$ & 0.98 \\
% AdamW $\epsilon$ & 1e-6 \\
% AdamW Clip Norm & 50.0 \\
% Auxiliary Forward Dynamic Task & Yes \\
% Auxiliary Inverse Dynamic Task & Yes \\
% Learner Push Model Every $n$ Steps & 25 \\
% Actor Pull Model Every $n$ Steps & 64 \\
% Num. Bandits & 7 \\
% Bandit Learning Rate & Uniform([0.05, 0.1, 0.2]) \\
% Bandit Tiling Width & Uniform([1, 2, 3]) \\
% Num. Bandit Candidates & 7 \\
% Bandit Value Normalization & Yes \\
% Bandit UCB Scaling & 1.0 \\
% Bandit Search Range for $1 / \tau$ & [0.0, 50.0] \\
\bottomrule
\end{tabular}}
\caption{Hyperparameter configurations for R2D2.}
\label{tab:r2d2_hyperparameters}
\end{center}
\end{table}
% \end{multicols}
% \clearpage

% \begin{multicols}{2}
\begin{table}[H]
\begin{center}
\scalebox{0.85}{
\begin{tabular}{l@{\hspace{.43cm}}l@{\hspace{.22cm}}}
\toprule
\textbf{Parameter} & \textbf{Value}  \\
\midrule
% Image Size & (84, 84) \\
% Grayscale & Yes \\
% Num. Action Repeats & 4 \\
% Num. Frame Stacks & 4 \\
% Action Space & Full \\
% End of Episode When Life Lost & No \\
{\colorred Num. States} & {\colorred 200M} \\
Sample Reuse & 2 \\
% Num. Environments & 160 \\
Reward Shape & $\log (|r| + 1.0) \cdot (2 \cdot 1_{\{r \geq 0\}} - 1_{\{r < 0\}})$ \\
% Reward Clip & No \\
% Intrinsic Reward & No \\
% Random No-ops & 30 \\
{\colorred Burn-in} & {\colorred 40} \\
{\colorred Seq-length} & {\colorred 80} \\
% Burn-in Stored Recurrent State & Yes \\
% Bootstrap & Yes \\
% Batch size & 64 \\
Discount ($\gamma$) & 0.997 \\
{\colorred Batch size} & {\colorred 64} \\
{\colorred Backbone} & {\colorred IMPALA,deep} \\
{\colorred LSTM Units} & {\colorred 256} \\
$V$-loss Scaling ($\alpha_1$) & 1.0 \\
$Q$-loss Scaling ($\alpha_2$) & 10.0 \\
$\pi$-loss Scaling ($\alpha_3$) & 10.0 \\
Temperature ($\tau$) & 1.0 \\
% Entropy Regularization & No \\
Importance Sampling Clip $\Bar{c}$ & 1.05 \\
Importance Sampling Clip $\Bar{\rho}$ & 1.05 \\
% Backbone & IMPALA,deep \\
% LSTM Units & 256 \\
% Optimizer & Adam Weight Decay \\
% Weight Decay Rate & 0.01 \\
% Weight Decay Schedule & Anneal linearly to 0 \\
% Learning Rate & 5e-4 \\
% Warmup Steps & 4000 \\
% Learning Rate Schedule & Anneal linearly to 0 \\
% AdamW $\beta_1$ & 0.9 \\
% AdamW $\beta_2$ & 0.98 \\
% AdamW $\epsilon$ & 1e-6 \\
% AdamW Clip Norm & 50.0 \\
% Auxiliary Forward Dynamic Task & Yes \\
% Auxiliary Inverse Dynamic Task & Yes \\
% Learner Push Model Every $n$ Steps & 25 \\
% Actor Pull Model Every $n$ Steps & 64 \\
% Num. Bandits & 7 \\
% Bandit Learning Rate & Uniform([0.05, 0.1, 0.2]) \\
% Bandit Tiling Width & Uniform([1, 2, 3]) \\
% Num. Bandit Candidates & 7 \\
% Bandit Value Normalization & Yes \\
% Bandit UCB Scaling & 1.0 \\
% Bandit Search Range for $1 / \tau$ & [0.0, 50.0] \\
\bottomrule
\end{tabular}}
\caption{Hyperparameter configurations for CASA + DR-Trace.}
\label{tab:drtrace_hyperparameters}
\end{center}
\end{table}
% \end{multicols}
\clearpage

\section{Evaluation of CASA on Atari Games}
\label{app:atari_results}

Random scores and average human's scores are from \citep{agent57}.
Human World Records (HWR) are from \citep{saber}.
Rainbow's scores are from \citep{rainbow}.
IMPALA's scores are from \citep{impala}.
LASER's scores are from \citep{laser}, no sweep at 200M. 
% \haiyan{no need to show RND/human columns}
% \changnan{Will change later. What about HWR?}
% As there are many versions of R2D2 and NGU, we use original papers'.
% R2D2's scores are from \citep{r2d2}.
% NGU's scores are from \citep{ngu}.
% Agent57's scores are from \citep{agent57}.

% According to the videos, we observe that there exist 19 games whose results achieve \textit{Full Score} by our method.
% We underline the results of these games in the table below.

\tiny
\begin{center}
\hskip -0.05in
\scalebox{1.05}{
\begin{tabular}{ccccccccccc}
\toprule
Games & RND & HUMAN & RAINBOW & HNS(\%) & IMPALA & HNS(\%) & LASER & HNS(\%) & CASA & HNS(\%) \\
\midrule
Scale  &     &       & 200M   &       &  200M    &        & 200M   &
       &  200M   &  \\
\midrule
 alien  & 227.8 & 7127.8 & 9491.7 & 134.26 & 15962.1  & 228.03 & \textbf{35565.9} & \textbf{512.15} & 26137 & 375.50 \\
 amidar & 5.8   & 1719.5 & \textbf{5131.2} & \textbf{299.08} & 1554.79  & 90.39  & 1829.2  & 106.4  & 560   & 32.34 \\
 assault & 222.4 & 742   & 14198.5 & 2689.78 & 19148.47 & 3642.43  & \textbf{21560.4} & \textbf{4106.62} & 16228  & 3080.37  \\
 asterix & 210   & 8503.3 & \textbf{428200} & \textbf{5160.67} & 300732   & 3623.67  & 240090  & 2892.46 & 213580 & 2572.80 \\
 asteroids & 719 & 47388.7 & 2712.8 & 4.27   & 108590.05 & 231.14  & \textbf{213025}  &  \textbf{454.91} & 80339   & 170.60 \\
 atlantis & 12850 & 29028.1 & 826660 & 5030.32 & 849967.5 & 5174.39 & 841200 & 5120.19 & \textbf{3211600} & \textbf{19772.10} \\
 bank heist & 14.2 & 753.1  & \textbf{1358}   & \textbf{181.86}  & 1223.15  & 163.61  & 569.4  & 75.14   & 895.3   & 119.24 \\
 battle zone & 236 & 37187.5 & 62010 & 167.18  & 20885    & 55.88  & 64953.3 & 175.14  & \textbf{91269}   & \textbf{246.36} \\
 beam rider & 363.9 & 16926.5 & 16850.2 & 99.54 & 32463.47 & 193.81 & \textbf{90881.6} & \textbf{546.52} & 57456   & 344.70 \\
 berzerk & 123.7 & 2630.4  & 2545.6   & 96.62  & 1852.7   & 68.98  & \textbf{25579.5}  & \textbf{1015.51} & 1648   & 60.81 \\
 bowling & 23.1 & 160.7   & 30   & 5.01        & 59.92    & 26.76  & 48.3    & 18.31   & \textbf{162.4}     & \textbf{101.24} \\
 boxing  & 0.1  & 12.1    & 99.6 & 829.17      & 99.96    & 832.17 & \textbf{100}   & \textbf{832.5}     & 98.3   & 818.33 \\
 breakout & 1.7 & 30.5    & 417.5 & 1443.75    & \textbf{787.34}   & \textbf{2727.92} & 747.9 & 2590.97  & 624.3  & 2161.81 \\
 centipede & 2090.9 & 12017 & 8167.3 & 61.22   & 11049.75 & 90.26   & \textbf{292792} & \textbf{2928.65} & 102600 & 1012.57 \\
 chopper command & 811 & 7387.8 & 16654 & 240.89 & 28255  & 417.29  & \textbf{761699} & \textbf{11569.27} & 616690 & 9364.42 \\
 crazy climber & 10780.5 & 36829.4 & \textbf{168788.5} & \textbf{630.80} & 136950 & 503.69 & 167820  & 626.93 & 161250 & 600.70 \\
 defender & 2874.5 & 18688.9 & 55105 & 330.27 & 185203 & 1152.93 & 336953  & 2112.50   & \textbf{421600} & \textbf{2647.75} \\
 demon attack & 152.1 & 1971 & 111185 & 6104.40 & 132826.98 & 7294.24 & 133530 & 7332.89 & \textbf{291590} & \textbf{16022.76} \\
 double dunk & -18.6 & -16.4 & -0.3   & 831.82  & -0.33     & 830.45  & 14     & 1481.82 & \textbf{20.25} & \textbf{1765.91} \\
 enduro      & 0   & 860.5 & 2125.9 & 247.05  & 0       & 0.00     & 0    & 0.00       & \textbf{10019} & \textbf{1164.32} \\
 fishing derby & -91.7 & -38.8 & 31.3 & 232.51  & 44.85   & 258.13    & 45.2   & 258.79  & \textbf{53.24} & \textbf{273.99} \\
 freeway       & 0     & 29.6  & \textbf{34} & \textbf{114.86}  & 0     & 0.00       & 0    & 0.00       & 3.46   & 11.69 \\
 frostbite     & 65.2  & 4334.7 & \textbf{9590.5} & \textbf{223.10} & 317.75 & 5.92     & 5083.5 & 117.54  & 1583 & 35.55 \\
 gopher  & 257.6 & 2412.5 & 70354.6 & 3252.91    & 66782.3 & 3087.14 & 114820.7 & 5316.40 & \textbf{188680} & \textbf{8743.90} \\
 gravitar & 173 & 3351.4  & 1419.3  & 39.21   & 359.5      & 5.87    & 1106.2   & 29.36   & \textbf{4311}  & \textbf{130.19} \\
 hero   & 1027 & 30826.4 & \textbf{55887.4} & \textbf{184.10}   & 33730.55  & 109.75   & 31628.7 & 102.69   & 24236 & 77.88 \\
 ice hockey & -11.2 & 0.9 & 1.1    & 101.65   & 3.48      & 121.32   & \textbf{17.4}    & \textbf{236.36}   & 1.56  & 105.45 \\
 jamesbond  & 29    & 302.8 & 19809 & 72.24   & 601.5     & 209.09   & \textbf{37999.8} & \textbf{13868.08} & 12468 & 4543.10 \\
 kangaroo   & 52    & 3035 & \textbf{14637.5} & \textbf{488.05} & 1632    & 52.97    & 14308   & 477.91     & 5399 & 179.25 \\
 krull     & 1598   & 2665.5 & 8741.5  & 669.18 & 8147.4  & 613.53   & 9387.5  &  729.70  & \textbf{64347} & \textbf{5878.13} \\
 kung fu master & 258.5 & 22736.3 & 52181 & 230.99 & 43375.5 & 191.82 & \textbf{607443} & \textbf{2701.26}  & 124630.1 & 553.31 \\
 montezuma revenge & 0  & \textbf{4753.3}  & 384   & 8.08   & 0       & 0.00   & 0.3    & 0.01     & 2488.4  & 52.35 \\
 ms pacman  & 307.3 & 6951.6   & 5380.4  & 76.35   & 7342.32 & 105.88 & 6565.5 & 94.19    & \textbf{7579}  & \textbf{109.44} \\
 name this game & 2292.3 & 8049 & 13136 & 188.37   & 21537.2 & 334.30 & 26219.5 & 415.64  & \textbf{32098} & \textbf{517.76} \\
 phoenix & 761.5 & 7242.6  & 108529 & 1662.80   & 210996.45  & 3243.82 & \textbf{519304} & \textbf{8000.84} & 498590 & 7681.23 \\
 pitfall & -229.4 & \textbf{6463.7} & 0      & 3.43      & -1.66      & 3.40    & -0.6   & 3.42    & -17.8 & 3.16 \\
 pong    & -20.7  & 14.6   & 20.9   & 117.85    & 20.98      & 118.07  & \textbf{21}     &  \textbf{118.13} & 20.39  & 116.40 \\
 private eye & 24.9 & \textbf{69571.3} & 4234 & 6.05     & 98.5       & 0.11    & 96.3   & 0.10    & 134.1  & 0.16 \\
 qbert  & 163.9 & 13455.0 & 33817.5  & 253.20   & \textbf{351200.12}  & \textbf{2641.14} & 21449.6 & 160.15 & 27371 & 204.70 \\
 riverraid & 1338.5 & 17118.0 & 22920.8 & 136.77 & 29608.05  & 179.15  & \textbf{40362.7} & \textbf{247.31} & 11182 & 62.38 \\
 road runner & 11.5 & 7845    & 62041   & 791.85 & 57121     & 729.04  & 45289   & 578.00 & \textbf{251360} & \textbf{3208.64} \\
 robotank   & 2.2   & 11.9  & 61.4   & 610.31    & 12.96     & 110.93  & \textbf{62.1}    & \textbf{617.53} & 10.44  & 84.95 \\
 seaquest  & 68.4 & \textbf{42054.7} & 15898.9 & 37.70    & 1753.2    & 4.01    & 2890.3  & 6.72   & 11862  & 28.09 \\
 skiing & -17098  & \textbf{-4336.9} & -12957.8 & 32.44  & -10180.38 & 54.21   & -29968.4 & -100.86 & -12730 & 34.23 \\
 solaris & 1236.3 & \textbf{12326.7} & 3560.3  & 20.96  & 2365      & 10.18   & 2273.5   & 9.35    & 2319 & 9.76 \\
 space invaders & 148 & 1668.7 & 18789 & 1225.82 & 43595.78 & 2857.09 & \textbf{51037.4} & \textbf{3346.45} & 3031 & 189.58 \\
 star gunner & 664 & 10250 & 127029    & 1318.22 & 200625   & 2085.97 & 321528  & 3347.21 & \textbf{337150} & \textbf{3510.18} \\
 surround    & -10 & 6.5   & \textbf{9.7}       & \textbf{119.39}  & 7.56     & 106.42  & 8.4     & 111.52  & -10  & 0.00 \\
 tennis  & -23.8   & -8.3 & 0        & 153.55    & 0.55     & 157.10  & \textbf{12.2}    & \textbf{232.26}  & -21.05 & 17.74 \\
 time pilot & 3568 & 5229.2 & 12926 & 563.36     & 48481.5  & 2703.84 & \textbf{105316}  & \textbf{6125.34} & 84341 & 4862.62 \\
 tutankham  & 11.4 & 167.6  & 241   & 146.99     & 292.11   & 179.71  & 278.9   & 171.25  & \textbf{381} & \textbf{236.62} \\
 up n down  & 533.4 & 11693.2 & 125755 & 1122.08 & 332546.75 & 2975.08 & 345727 & 3093.19 & \textbf{416020} & \textbf{3723.06} \\
 venture    & 0     & \textbf{1187.5}  & 5.5    & 0.46    & 0         & 0.00    & 0      & 0.00    & 0  & 0.00 \\
 video pinball & 0 & 17667.9  & 533936.5 & 3022.07 & \textbf{572898.27} & \textbf{3242.59} & 511835 & 2896.98 & 297920 & 1686.22 \\
 wizard of wor & 563.5 & 4756.5 & 17862.5 & 412.57 & 9157.5    & 204.96  & \textbf{29059.3} & \textbf{679.60} & 26008 & 606.83 \\
 yars revenge & 3092.9 & 54576.9 & 102557 & 193.19 & 84231.14  & 157.60 & \textbf{166292.3} & \textbf{316.99} & 118730 & 224.61 \\
 zaxxon       & 32.5   & 9173.3 & 22209.5 & 242.62 & 32935.5   & 359.96 & 41118    & 449.47 & \textbf{46070.8}  & \textbf{503.66} \\
\hline
MEAN HNS(\%) &     0.00 & 100.00   &         & 873.97 &         & 957.34  &        & 1741.36 &      & 1941.08 \\
\hline
MEDIAN HNS(\%) & 0.00   & 100.00   &         & 230.99 &         & 191.82  &        & 454.91  &      & 246.36 \\
\bottomrule
\end{tabular}
}
% \caption{Comparison With 200M Scale Algorithms.}
\end{center}
\normalsize
\clearpage

\tiny
\begin{center}
\begin{tabular}{ccccccccccc}
\toprule
Games & RND & HWR & RAINBOW & SABER(\%) & IMPALA & SABER(\%) & LASER & SABER(\%) & CASA & SABER(\%) \\
\midrule
Scale  &     &       & 200M   &       &  200M    &        & 200M   & &  200M   &  \\
\midrule
 alien              & 227.8     & \textbf{251916}    & 9491.7   &3.68    & 15962.1    & 6.25       & 976.51  & 14.04                                & 26137             & 10.29    \\
 amidar             & 5.8       & \textbf{104159}    & 5131.2   &4.92    & 1554.79    & 1.49       & 1829.2  & 1.75                                 & 560             & 0.53            \\
 assault            & 222.4     & 8647               & 14198.5  &165.90  & 19148.47   & 200.00     & \textbf{21560.4} & \textbf{200.00}                               & 16228             & 189.99   \\
 asterix            & 210       & \textbf{1000000}   & 428200   &42.81   & 300732     & 30.06      & 240090  & 23.99                                & 213580            & 21.34  \\
 asteroids          & 719       & \textbf{10506650}  & 2712.8   &0.02    & 108590.05  & 1.03       & 213025  & 2.02                                 & 80339            & 0.76   \\
 atlantis           & 12850     & \textbf{10604840}  & 826660   &7.68    & 849967.5   & 7.90       & 841200  & 7.82                                 & 3211600               & 30.20   \\
 bank heist         & 14.2      & \textbf{82058}     & 1358     &1.64    & 1223.15    & 1.47       & 569.4   & 0.68                                 & 895.3             & 1.07 \\
 battle zone        & 236       &\textbf{801000}    & 62010    &7.71    & 20885      & 2.58       & 64953.3 & 8.08                                           & 91269            & 11.37  \\
 beam rider         & 363.9     & \textbf{999999}    & 16850.2  &1.65    & 32463.47   & 3.21       & 90881.6 & 9.06                                 & 57456           & 5.71    \\
 berzerk            & 123.7     & \textbf{1057940}            & 2545.6   &0.23    & 1852.7     & 0.16       & 25579.5 & 2.41                        & 1648             & 0.14        \\
 bowling            & 23.1      & \textbf{300}       & 30       &2.49    & 59.92      & 13.30      & 48.3    & 9.10                                 & 162.4            & 50.31   \\
 boxing             & 0.1       & \textbf{100}                & 99.6     &99.60   & 99.96      & 99.96      & \textbf{100}     & \textbf{100.00}    & 98.3             & 98.3  \\
 breakout           & 1.7       & \textbf{864}                & 417.5    &48.22   & 787.34     & 91.11      & 747.9   & 86.54                       & 624.3             & 72.20  \\
 centipede          & 2090.9    & \textbf{1301709}   & 8167.3   &0.47    & 11049.75   & 0.69       & 292792  & 22.37                                & 102600           & 7.73 \\
 chopper command    & 811       & \textbf{999999}             & 16654    &1.59    & 28255      & 2.75       & 761699  & 76.15                       & 616690            & 61.64 \\
 crazy climber      & 10780.5   & \textbf{219900}    & 168788.5 &75.56   & 136950     & 60.33      & 167820  & 75.10                                         & 161250           & 71.95       \\
 defender           & 2874.5    & \textbf{6010500}   & 55105    &0.87    & 185203     & 3.03       & 336953  & 5.56                                 & 421600           & 6.97       \\
 demon attack       & 152.1     & \textbf{1556345}   & 111185   &7.13    & 132826.98  & 8.53       & 133530  & 8.57                                 & 291590           & 18.73       \\
 double dunk        & -18.6     & \textbf{21}                 & -0.3     &46.21   & -0.33      & 46.14      & 14      & 82.32                                & 20.25            & 98.11 \\
 enduro             & 0         & 9500               & 2125.9   &22.38   & 0          & 0.00       & 0       & 0.00                                 &\textbf{10019}             &\textbf{105.46}\\
 fishing derby      & -91.7     & \textbf{71}        & 31.3     &75.60   & 44.85      & 83.93      & 45.2    & 84.14                                & 53.24            & 89.08 \\
 freeway            & 0         & \textbf{38}        & 34       &89.47   & 0          & 0.00       & 0       & 0.00                                 & 3.46             & 9.11 \\
 frostbite          & 65.2      & \textbf{454830}    & 9590.5   &2.09    & 317.75     & 0.06       & 5083.5  & 1.10                                 & 1583            & 0.33        \\          
 gopher             & 257.6     & \textbf{355040}             & 70354.6  &19.76   & 66782.3    & 18.75      & 114820.7& 32.29                                & 188680          & 53.11 \\
 gravitar           & 173       & \textbf{162850}    & 1419.3   &0.77    & 359.5      & 0.11       & 1106.2  & 0.57                                 & 4311             & 2.54        \\
 hero               & 1027      & \textbf{1000000}            & 55887.4  &5.49    & 33730.55   & 3.27       & 31628.7 & 3.06                        & 24236            & 2.32 \\
 ice hockey         & -11.2     & \textbf{36}                 & 1.1      &26.06   & 3.48       & 31.10      & 17.4    & 60.59                                & 1.56             & 27.03 \\
 jamesbond          & 29        & \textbf{45550}              & 19809    &43.45   & 601.5      & 1.26       & 37999.8 & 83.41                                & 12468           & 27.33 \\
 kangaroo           & 52        & \textbf{1424600}            & 14637.5  &1.02    & 1632       & 0.11       & 14308   & 1.00                        & 5399           & 0.38       \\
 krull              & 1598      & \textbf{104100}    & 8741.5   &6.97    & 8147.4     & 6.39       & 9387.5  & 7.60                                          & 64347            & 61.22              \\
 kung fu master     & 258.5     & \textbf{1000000}   & 52181    &5.19    & 43375.5    & 4.31       & 607443  & 60.73                                         & 124630.1            & 12.44        \\
 montezuma revenge  &0          & \textbf{1219200}   & 384      &0.03    & 0          & 0.00       & 0.3     & 0.00                                 & 2488.4           & 0.20       \\
 ms pacman          & 307.3     & \textbf{290090}    & 5380.4   &1.75    & 7342.32    & 2.43       & 6565.5  & 2.16                                 & 7579             & 2.51     \\
 name this game     & 2292.3    & 25220              & 13136    &47.30   & 21537.2    & 83.94      & 26219.5 & 104.36                               &\textbf{32098}             &\textbf{130.00}  \\
 phoenix            & 761.5     & \textbf{4014440}   & 108529   &2.69    & 210996.45  & 5.24       & 519304  & 12.92                                & 498590           & 12.40           \\
 pitfall            & -229.4    & \textbf{114000}    & 0        &0.20    & -1.66      & 0.20       & -0.6    & 0.20               & -17.8            & 0.19     \\
 pong               & -20.7     & \textbf{21}                 & 20.9     &99.76   & 20.98      & 99.95      & \textbf{21}      & \textbf{100.00}    & 20.39           & 98.54    \\
 private eye        & 24.9      & \textbf{101800}    & 4234     &4.14    & 98.5       & 0.07       & 96.3    & 0.07                                 & 134.1           & 0.11         \\
 qbert              & 163.9     & \textbf{2400000}   & 33817.5  &1.40    & 351200.12  & 14.63      & 21449.6 & 0.89                                 & 27371            & 1.13     \\
 riverraid          & 1338.5    & \textbf{1000000}   & 22920.8  &2.16    & 29608.05   & 2.83       & 40362.7 & 3.91                                 & 11182            & 0.99    \\
 road runner        & 11.5      & \textbf{2038100}   & 62041    &3.04    & 57121      & 2.80       & 45289   & 2.22                                 & 251360            & 12.33          \\
 robotank           & 2.2       & \textbf{76}                 & 61.4     &80.22   & 12.96      & 14.58      & 62.1    & 81.17                                & 10.44            & 11.17 \\
 seaquest           & 68.4      & \textbf{999999}             & 15898.9  &1.58    & 1753.2     & 0.17       & 2890.3  & 0.28                                 & 11862          & 1.18 \\
 skiing             & -17098    & \textbf{-3272}     & -12957.8 &29.95   & -10180.38  & 50.03      & -29968.4& -93.09                               & -12730            & 31.59       \\
 solaris            & 1236.3    & \textbf{111420}    & 3560.3   &2.11    & 2365       & 1.02       & 2273.5  & 0.94                                 & 2319           & 0.98      \\
 space invaders     & 148       & \textbf{621535 }   & 18789    &3.00    & 43595.78   & 6.99       & 51037.4 & 8.19                                 & 3031           & 0.46            \\
 star gunner        & 664       & 77400              & 127029   &164.67  & 200625     & 200.00     & 321528  & 200.00                               &\textbf{337150}            &\textbf{200.00}   \\
 surround           & -10       & 9.6                & \textbf{9.7}      &\textbf{100.51}  & 7.56       & 89.59      & 8.4     & 93.88              & -10              & 0.00 \\
 tennis             & -23.8     & \textbf{21}                 & 0        &53.13   & 0.55       & 54.35      & 12.2    & 80.36                                & -21.05              & 6.14 \\
 time pilot         & 3568      & 65300              & 12926    &15.16   & 48481.5    & 72.76      & \textbf{105316}  & \textbf{164.82}                               & 84341            & 130.84   \\
 tutankham          & 11.4      & \textbf{5384}      & 241      &4.27    & 292.11     & 5.22       & 278.9   & 4.98                                 & 381             & 6.88          \\
 up n down          & 533.4     & 82840              & 125755   &152.14  & 332546.75  & 200.00     & 345727  & 200.00                               &\textbf{416020}            &\textbf{200.00} \\
 venture            & 0         & \textbf{38900}     & 5.5      &0.01    & 0          & 0.00       & 0       & 0.00                                 & 0             & 0.00                 \\
 video pinball      & 0         & \textbf{89218328}  & 533936.5 &0.60    & 572898.27  & 0.64       & 511835  & 0.57                                 & 297920           & 0.33                  \\\
 wizard of wor      & 563.5     & \textbf{395300}    & 17862.5  &4.38    & 9157.5     & 2.18       & 29059.3 & 7.22                                 & 26008            & 6.45                \\
 yars revenge       & 3092.9    & \textbf{15000105}  & 102557   &0.66    & 84231.14   & 0.54       & 166292.3& 1.09                                 & 118730           & 0.77              \\
 zaxxon             & 32.5      & \textbf{83700}              & 22209.5  &26.51   & 32935.5    & 39.33      & 41118   & 49.11                                & 46070.8            & 55.03  \\
\hline
MEAN SABER(\%) &     0.00 & 100.00   &         & 28.39 &         & 29.45  &        & 36.78 &      &36.10\\
\hline
MEDIAN SABER(\%) & 0.00   & 100.00   &         & 4.92 &         & 4.31  &        & 8.08  &      &10.29  \\
\bottomrule
\end{tabular}
% \caption{Score table of SOTA 200M model-free algorithms on SABER.}
\end{center}
% \clearpage
% \normalsize
\clearpage

\begin{figure*}[t]
    \centering
    \vspace{-1.3cm}
    % \hspace{-1.5cm}
    \includegraphics[width=1.0\linewidth]{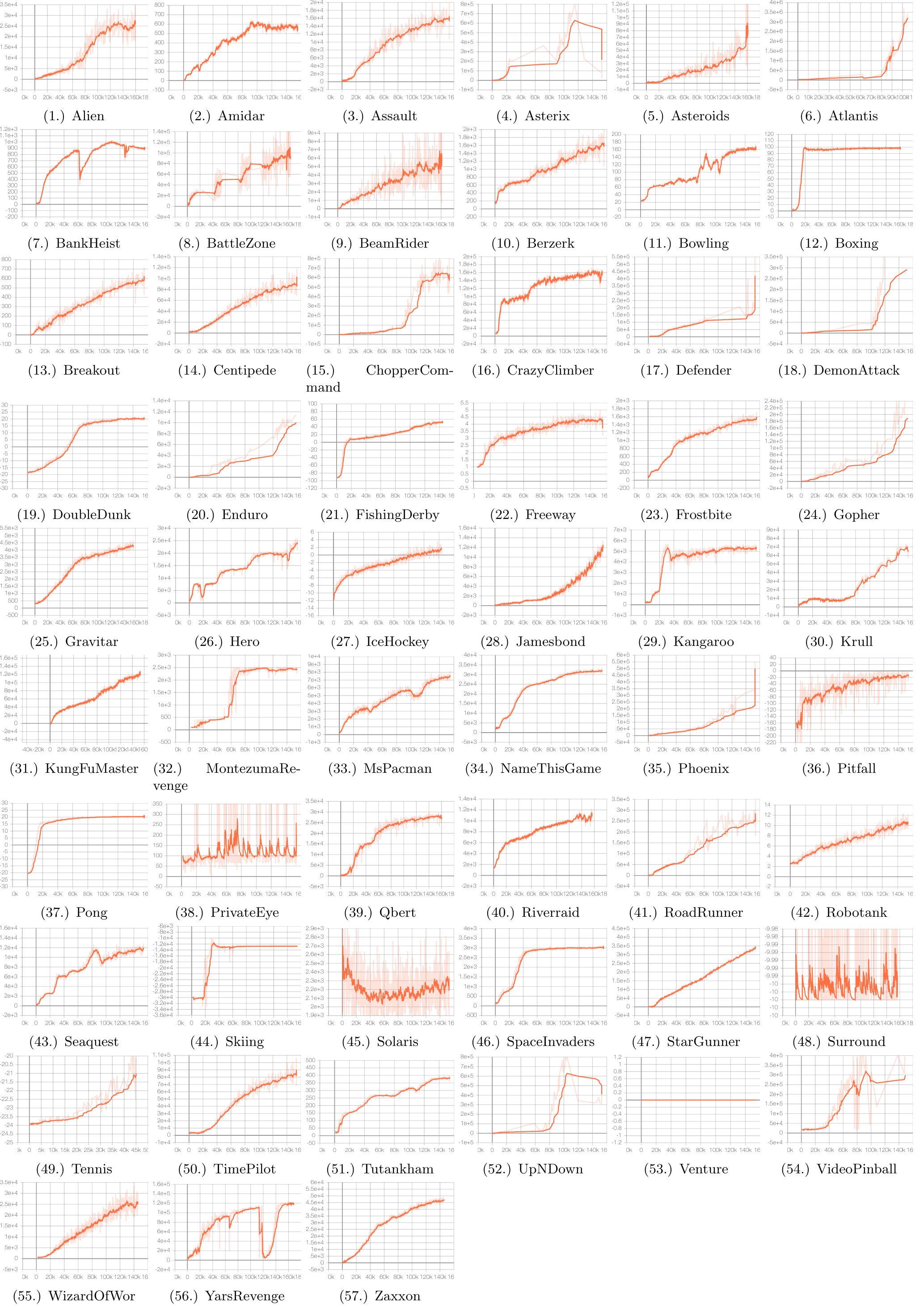}
\end{figure*}

\clearpage

% \section{Appendix}
% You may include other additional sections here.

\end{document}